\newtheoremstyle{thm}
  {2pt} 
  {2pt} 
  {\itshape} 
  {} 
  {\bfseries} 
  {.} 
  {.5em} 
  {} 
\theoremstyle{thm}
\newtheorem{theorem}{Theorem}
\newtheorem{lemma}{Lemma}
\newtheorem{definition}{Definition}
\preto\equation{\par\nobreak\small\noindent}
\preto\align{\par\nobreak\small\noindent}
\definecolor{matlab-blue}{rgb}{0,0.4470,0.7410}
\definecolor{matlab-orange}{rgb}{0.8500,0.3250,0.0980}
\definecolor{matlab-yellow}{rgb}{0.9290,0.6940,0.1250}
\definecolor{matlab-green}{rgb}{0.4660,0.6740,0.1880}
\definecolor{matlab-red}{rgb}{0.6350,0.0780,0.1840}
\definecolor{matlab-purple}{rgb}{0.4901,0.1803,0.5529}
\definecolor{matlab-light-blue}{rgb}{0.298,0.741,0.929}
\definecolor{ourmethod}{gray}{0.93}
\newcommand{\define}{\triangleq}
\newcommand{\btheta}{\boldsymbol{\theta}}
\newcommand{\bs}{\mathbf{s}}
\newcommand{\ba}{\mathbf{a}}
\newcommand{\bw}{\mathbf{w}}
\newcommand{\D}{\mathcal{D}}
\newcommand{\A}{\mathcal{A}}
\renewcommand{\S}{\mathcal{S}}
\newcommand{\B}{\mathcal{B}}
\renewcommand{\L}{\mathcal{L}}
\newcommand{\C}{\mathcal{C}}
\DeclareMathOperator*{\E}{\mathbb{E}}
\DeclareMathOperator*{\argmax}{arg\,max}
\DeclareMathOperator*{\argmin}{arg\,min}
\newcommand{\underdescribe}[3][0pt]{\hspace*{.12em}\underbracket[0.5pt][1pt]{#2\hspace*{#1}}_{#3}}
\icmltitlerunning{$\Psi \Phi$-Learning: RL with Demonstrations using Successor Features and Inverse TD Learning}
\newcommand{\bftab}{\fontseries{b}\selectfont}
\begin{document}

\twocolumn[
\icmltitle{PsiPhi-Learning: Reinforcement Learning with Demonstrations using Successor Features and Inverse Temporal Difference Learning}

\icmlsetsymbol{equal}{*}

\begin{icmlauthorlist}
\icmlauthor{Angelos Filos}{ox}
\icmlauthor{Clare Lyle}{ox}
\icmlauthor{Yarin Gal}{ox}
\icmlauthor{Sergey Levine}{ucb}
\icmlauthor{Natasha Jaques}{equal,ucb,brain}
\icmlauthor{Gregory Farquhar}{equal,dm}
\end{icmlauthorlist}

\icmlaffiliation{ox}{University of Oxford}
\icmlaffiliation{ucb}{University of California, Berkeley}
\icmlaffiliation{brain}{Google Research, Brain team}
\icmlaffiliation{dm}{DeepMind}

\icmlcorrespondingauthor{Angelos Filos}{angelos.filos@cs.ox.ac.uk}

\icmlkeywords{
  Machine Learning,
  Reinforcement Learning,
  Imitation Learning,
  Successor Features,
  Transfer Learning,
  Multi-Agent,
  Autonomous Driving,
}

\vskip 0.3in
]

\printAffiliationsAndNotice{\icmlEqualContribution}

\begin{abstract}
We study reinforcement learning (RL) with no-reward demonstrations, a setting in which an RL agent has access to additional data from the interaction of other agents with the same environment.
However, it has no access to the rewards or goals of these agents, and their objectives and levels of expertise may vary widely.
These assumptions are common in multi-agent settings, such as autonomous driving.
To effectively use this data, we turn to the framework of successor features.
This allows us to disentangle shared features and dynamics of the environment from agent-specific rewards and policies.
We propose a multi-task inverse reinforcement learning (IRL) algorithm, called \emph{inverse temporal difference learning} (ITD), that learns shared state features, alongside per-agent successor features and preference vectors, purely from demonstrations without reward labels.
We further show how to seamlessly integrate ITD with learning from online environment interactions, arriving at a novel algorithm for reinforcement learning with demonstrations, called $\Psi \Phi$-learning (pronounced `Sci-Fi').
We provide empirical evidence for the effectiveness of $\Psi \Phi$-learning as a method for improving RL, IRL, imitation, and few-shot transfer, and derive worst-case bounds for its performance in zero-shot transfer to new tasks.
\end{abstract}

\section{Introduction}
\label{sec:introduction}

If artificial agents are to be effective in the real world, they will need to thrive in environments populated by other agents.
Agents are typically goal-directed, sometimes by definition \citep{franklin1996agent}. While their goals can be different, they often depend on shared salient features of the environment, and may be able to interact with and affect the environment in similar ways.
Humans and other animals make ready use of these similarities to other agents while learning \citep{henrich2017secret,laland2018darwin}.
We can observe the goal-directed behaviours of other humans, and combine these observations with our own experiences, to quickly learn how to achieve our own goals.
If reinforcement learning~\citep[RL,][]{sutton2018reinforcement} agents could similarly interpret the behaviour of others, they could learn more efficiently, relying less on solitary trial and error.

\begin{figure}[t]
  \centering
  \includegraphics[width=\linewidth]{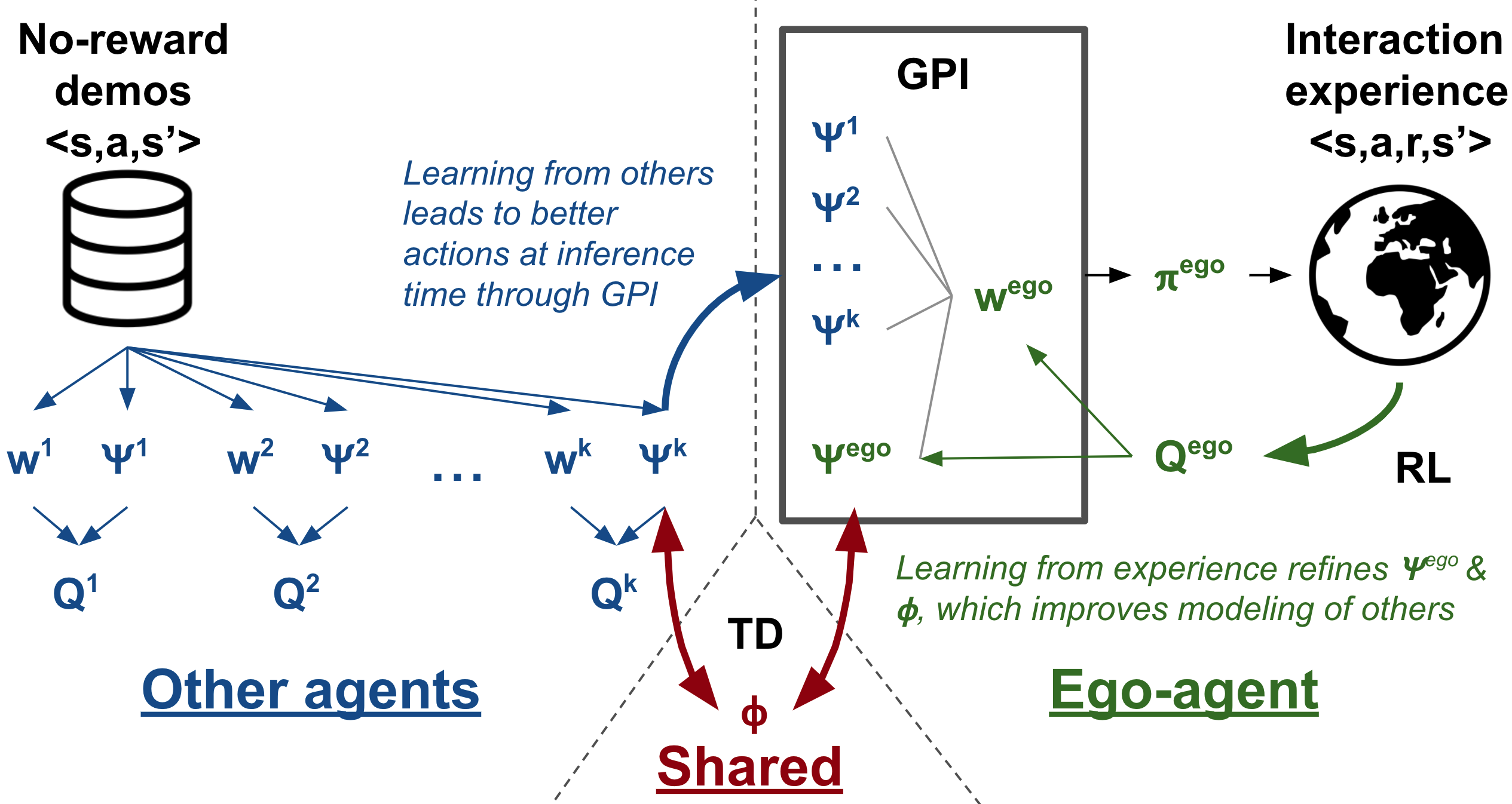}
  \caption{
    \textbf{The $\Psi \Phi$-learning algorithm for RL with no-reward demonstrations.}
    Demonstrations $\D$ contain data from {\color{matlab-blue}other agents} for \emph{unknown} tasks. We employ \emph{inverse temporal difference learning} (ITD, cf. Section~\ref{subsec:inverse-temporal-difference-learning}) to recover other agents' successor features (SFs) $\Psi^k$ and preferences $\bw^k$.
    The {\color{matlab-green}ego-agent} combines the estimated SFs of others along with its own preferences $\bw^{\text{ego}}$ and SFs $\Psi^{\text{ego}}$ with generalised policy improvement (GPI, cf. Section~\ref{subsec:successor-features-and-cumulants}), generating experience.
    Both the demonstrations and the ego-experience are used to learn the {\color{matlab-red}shared cumulants} $\Phi$.
}
\label{fig:implementation}
\end{figure}

To this end, we formalise and address a problem setting in which an agent (the `ego-agent’) is given access to observations and actions drawn from the experiences of other goal-directed agents interacting with the same environment, but pursuing distinct goals.
These observed trajectories are unlabelled in the sense that they lack the goals or rewards of the other agents.
This type of data is readily available in many real-world settings, either from (i) observing other agents acting simultaneously with the ego-agent in the same (multi-agent) environment, or (ii) multi-task demonstrations collected independently from the ego-agent's experiences.
Consider autonomous driving as a motivating example: the car can observe the decisions of many nearby human drivers with various preferences and destinations, or may have access to a large offline dataset of such demonstrations.
Because the other agents are pursuing their own varied goals, it can be difficult to directly use this information with conventional imitation learning methods \citep{widrow1964pattern} or inverse RL (IRL) \citep{ng2000algorithms,ziebart2008maximum}.

While the ego-agent should not copy other agents directly, it is likely that the behaviour of all agents depends on shared features of the environment.
To disentangle such shared features from agent-specific goals, we turn to the framework of successor features \citep{dayan1993improving,kulkarni2016deep,barreto2017successor}.
Successor features are a representation that captures the sum of state features an agent's policy will reach in the future.
An agent's goal is represented separately as a preference vector.

In this paper, we demonstrate how a reinforcement learner can benefit from multi-task demonstrations using the framework of successor features.
The key contributions are:
\begin{enumerate}
  \item \textbf{Offline multi-task IRL:}
    We propose an inverse RL algorithm, called \emph{inverse temporal difference} (ITD) \emph{learning}.
    Using only demonstrations, we learn shared state features, alongside per-agent successor features and inferred preferences.
    The reward functions can be trivially computed from these learned quantities.
    We show empirically that ITD achieves superior or comparable performance to prior methods.

  \item  \textbf{RL with no-reward demonstrations:}
    By combining ITD with learning from environment interactions, we arrive at a novel algorithm for RL with unlabelled demonstrations, called $\Psi \Phi$-learning (pronounced `Sci-Fi').
    $\Psi \Phi$-learning is compatible with sub-optimal demonstrations.
    It treats the demonstrated trajectories as being soft-optimal under \emph{some} task and employs ITD to recover successor features for the demonstrators' policies.
    $\Psi \Phi$-learning inherits the unbiased, asymptotic performance of RL methods while leveraging the provided demonstrations with ITD.
    When the goals of any of the demonstrators are even partially aligned with the $\Psi \Phi$-learner, this enables much faster learning than solitary RL.
    Otherwise, when the demonstrations are not useful or even misleading, it gracefully falls back to standard RL, unlike na{\"i}ve behaviour cloning or IRL.

  \item   \textbf{Few-shot adaptation with task inference:}
    Taking full-advantage of the successor features framework, our $\Psi \Phi$-learner can even adapt zero-shot to new goals it has never seen or experienced during training, but which are partially aligned with the demonstrated goals.
    This is possible due to the disentanglement of representations into task-specific features (i.e., preferences) and shared state features.
    We can efficiently update the task-specific preferences and rely on generalised policy improvement for safe policy updates.
    We derive worst-case bounds for the performance of $\Psi \Phi$-learning in zero-shot transfer to new tasks.
\end{enumerate}

We evaluate $\Psi \Phi$-learning in a set of grid-world environments, a traffic-flow simulator~\citep{highway-env}, and a task from the ProcGen suite~\cite{cobbe2020leveraging}, observing advantages over vanilla RL, imitation learning \cite{reddy2019sqil,ho2016generative}, and auxiliary-task baselines \cite{hernandez2017survey}.
Thanks to the shared state features between the ITD and RL components, we find empirically that the $\Psi \Phi$-learner not only improves its ego-learning with demonstrations, but also enhances its ability to model others agents using its own experience.

\section{Background and Problem Setting}
\label{sec:background-and-problem-setting}

We consider a world that can be represented as an infinite horizon controlled Markov process (CMP) given by the tuple: $\C \define  \langle \mathcal{S}, \mathcal{A}, P, \gamma \rangle$.
$\mathcal{S}$ and $\mathcal{A}$ represent the continuous state and discrete action spaces, respectively, $\bs' \sim P(\cdot \vert \bs, \ba)$ describes the transition dynamics and $\gamma$ is the discount factor.
A \textbf{task} is formulated as a Markov decision process~\citep[MDP,][]{puterman2014markov}, characterised by a reward function, $R: \mathcal{S} \times \mathcal{A} \rightarrow \mathbb{R}$, i.e., $M \define  \langle \C, R \rangle$.

The goal of an agent is to find a policy which maps from states to a probability distribution over actions, $\pi: \mathcal{S} \rightarrow \Delta(\mathcal{A})$, maximising the expected discounted sum of rewards $G^{R} \define \sum_{t=0}^{\infty} \gamma^{t} R(\bs_t, \ba_t)$.
The action-value function of the policy $\pi$ is given by $Q^{\pi, R}(\bs, \ba) \define \E^{\C, \pi} \left[ \left. G^{R} \right| \bs_0 = \bs, \ba_0 = \ba \right]$, where $\E^{\C, \pi} \left[ \cdot \right]$ denotes expected value when following policy $\pi$ in environment $\C$.

\subsection{RL with No-Reward Demonstrations}
\label{subsec:rl-with-no-reward-demonstrations}

We are interested in settings in which, in addition to an environment $\C$, the agent has also access to \textbf{demonstrations without rewards}, i.e., behavioural data of mixed and unknown quality.
The demonstrations are generated by other agents, whose goals and levels of expertise are unknown, and who have no incentive to educate the controlled agent.
We will refer to the controlled agent, i.e., reinforcement learner, as the `ego-agent' and to the agents that generated the demonstrations as `other-agents'.
We denote the demonstrations with $\D = \{ \tau_1, \tau_2, \ldots, \tau_N \}$, where the trajectory $\tau \define \left( \bs_0, \ba_0, \ldots, \bs_T, \ba_T; k \right)$ is generated by the $k$-th agent.
Note that each trajectory does include an identifier of the agent that generated it.
The ego-agent also gathers its own experience by interacting with the environment, collecting data $\B = \{ \left( \bs, \ba, \bs', r^{\text{ego}} \right) \}$.
Due to the lack of reward annotations in $\D$, and the fact that the data may be irrelevant to the ego-agent's task, it is not trivial to combine demonstrations from $\D$ with the ego-agent's experience $\B$.

\subsection{Successor Features and Cumulants}
\label{subsec:successor-features-and-cumulants}

To make use of the demonstrations $\D$, we wish to capture the notion that while the agents' rewards may differ, they share the same environment.
To do so we turn to the framework of successor features (SFs) ~\citep{barreto2017successor}, in which rewards are decomposed into cumulants and preferences:
\begin{definition}[Cumulants and Preferences]
  The (one-step) rewards are decomposed into task-agnostic \emph{cumulants} $\Phi(\bs, \ba) \in \mathbb{R}^{d}$, and task-specific \emph{preferences} $\bw \in \mathbb{R}^{d}$:
  \begin{equation}
    R^{\bw}(\bs, \ba) \define \Phi(\bs, \ba)^{\top} \bw \,.
  \label{eq:cumulants-times-preferences}
  \end{equation}
\end{definition}
%
%
The preferences $\bw$ are a representation of a possible goal in the world $\C$, in the sense that each $\bw$ gives rise to a task $M^{\bw} = \langle \C, R^{\bw} \rangle$.
We use `task', `goal', and `preferences' interchangeably when context makes it clear whether we are referring to $\bw$ itself, or the corresponding $M^{\bw}$ or $R^{\bw}$.
The action-value function for a policy $\pi$ in $M^{\bw}$ is then a function of the preferences $\bw$ and the $\pi$'s successor features.
%
\begin{definition}[Successor Features]
  For a given discount factor $\gamma \in [0, 1)$, policy $\pi$ and cumulants $\Phi(\bs, \ba) \in \mathbb{R}^{d}$, the successor features (SFs) for a state $\bs$ and action $\ba$ are:
  \begin{equation}
    \Psi^{\pi}(\bs, \ba) \define \mathbb{E}^{\C, \pi} \left[ \left. \sum_{t=0}^{\infty} \gamma^{t} \Phi(\bs_t, \ba_t)  \right| \bs_0 = \bs, \ba_0 = \ba \right] \,.
    \label{eq:sf-definition}
  \end{equation}
\end{definition}
The $i$-th component of $\Psi^{\pi}(\bs, \ba)$ gives the expected discounted sum of $\Phi(\bs, \ba)$'s $i$-th component, when starting from state $\bs$, taking action $\ba$ and then following policy $\pi$.
Intuitively, cumulants $\Phi$ can be seen as a vector-valued reward function and SFs $\Psi^{\pi}$ the corresponding vector-valued state-action value function for policy $\pi$.

An action-value function is then given by the dot product of the preferences $\bw$ and $\pi$'s SFs:
\begin{equation}
Q^{\pi, \bw}(\bs, \ba) = \Psi^{\pi}(\bs, \ba)^{\top} \bw \,.
\label{eq:Q-Psi-times-w}
\end{equation}
\begin{proof}
See~\citep{barreto2017successor}.
\end{proof}
Note that if we have $\Psi^{\pi}$, the value of $\pi$ for a new preference $\bw'$ can be easily computed.
This property allows the successor features of a set of policies to be repurposed for accelerating policy updates, as follows.
\begin{definition}[Generalised Policy Improvement]
  Given a set of policies $\Pi = \{ \pi_{1}, \ldots, \pi_{K} \}$ and a task with reward function $R$, generalised policy improvement (GPI) is the definition of a policy $\pi'$ s.t.
  \begin{equation}
    Q^{\pi', R}(\bs, \ba) \geq \sup_{\pi \in \Pi} Q^{\pi, R}(\bs, \ba) \,, \forall \bs \in \S, \ba \in \A \,.
  \label{eq:gpi}
  \end{equation}
\end{definition}
Provided the SFs of a set of policies, i.e., $\{\Psi^{\pi^{k}}\}_{k=1}^{K}$, we can apply GPI to derive a new policy $\pi'$ whose performance on a task $\bw$ is no worse that the performance of any of $\pi \in \Pi$ on the same task, given by
\begin{equation}
  \pi'(\bs) = \argmax_{a} \max_{\pi \in \Pi} \Psi^{\pi}(\bs, a)^{\top} \bw \,.
\label{eq:gpi-with-sf}
\end{equation}
Eqn.~(\ref{eq:gpi-with-sf}) suggests that if we could estimate the SFs of other agents, we could utilise them for improving the ego-agent's policy with GPI.
However, to do so with conventional methods we would require access to their rewards, cumulants and/or preferences.
In our setting, we can only observe their sequence of states and actions (Section~\ref{subsec:rl-with-no-reward-demonstrations}).
Next, we introduce our method that only requires no-reward demonstrations to estimate SFs and can be integrated seamlessly with GPI for accelerating reinforcement learning.

\section{Accelerating RL with Demonstrations}
\label{sec:method}

We now present a novel method, $\Psi \Phi$-learning, that leverages reward-free demonstrations to accelerate RL, shown in Figure~\ref{fig:implementation}.
Our approach consists of two components: (i) a novel inverse reinforcement learning algorithm, called \emph{inverse temporal difference} (ITD) \emph{learning}, for learning cumulants, per-agent successor features and corresponding agent preferences from demonstration without reward labels, and (ii) a novel RL 
algorithm that combines ITD with generalised policy improvement (GPI, Section~\ref{subsec:successor-features-and-cumulants}).

\subsection{Inverse Temporal Difference Learning}
\label{subsec:inverse-temporal-difference-learning}

Given demonstrations without rewards, $\D$, we model the agents that generated the data (i.e., blue nodes in Figure~\ref{fig:implementation}) as soft-optimal for an \emph{unknown} task.
In particular, the $k$-th agent's policy is soft-optimal under task $\bw^{k}$ and is given by
\begin{equation}
  \pi^{k}(\ba | \bs) = \frac{\exp(\Psi^{\pi^{k}}(\bs, \ba)^{\top} \bw^{k})}{\sum_{a} \exp(\Psi^{\pi^{k}}(\bs, a)^{\top} \bw^{k})} \,, \forall \bs \in \S, \ba \in \A \,.
\label{eq:pi-k}
\end{equation}
We choose to represent the action-value functions of the other agents with their SFs and preferences to enable GPI, and to expose task- and policy-agnostic structure in the form of shared cumulants $\Phi$.
The $k$-th agent's successor features are temporally consistent with these cumulants $\Phi$
\begin{equation}
  \Psi^{\pi^{k}}(\bs, \ba) = \Phi(\bs, \ba) + \gamma\mathbb{E}^{\C, \pi^{k}} \left[ \Psi^{\pi^{k}}(\bs', \pi^{k}(\bs))  \right] \,.
\label{eq:psi-k}
\end{equation}
To learn these quantities from $K$ demonstrators, we parameterise the SFs with $\btheta^{\Psi^{k}}$, preferences with $\bw^{k}$, and shared cumulants with $\btheta_{\Phi}$.
A schematic of the model architecture and further details are provided in Appendix~\ref{app:implementation-details}.
The parameters are learned by minimising a behavioural cloning and SFs TD loss based on equations \eqref{eq:pi-k} and \eqref{eq:psi-k}.

\textbf{Behavioural cloning loss.}
~Given demonstrations generated only by the $k$-th agent, i.e., $\D^{k} \subset \D$, we train its successor features $\btheta_{\Psi^{k}}$ and the preferences $\bw^{k}$ by minimising the negative log-likelihood of the demonstrations
\begin{equation}
\begin{tiny}
  \! \! \! \! \! \L_{\text{BC-}Q}(\btheta_{\Psi^{k}}, \bw^{k}) \define - \! \! \! \! \! \E_{\tau \sim \D^{k}} \! \! \! \log \frac{\exp(\Psi(\bs_{t}, \ba_{t}; \btheta_{\Psi^{k}})^{\top} \bw^{k})}{\sum_{a} \exp(\Psi(\bs_t, a; \btheta_{\Psi^{k}})^{\top} \bw^{k})} \,.
\label{eq:bcq-loss}
\end{tiny}
\end{equation}
Importantly, Eqn.~(\ref{eq:bcq-loss}) reflects the fact that along a trajectory $\tau$, the successor features are a function of the state and action at each time-step, $\bs_t$ and $\ba_t$, while the preferences $\bw^{k}$ are learnable but consistent across time and trajectories.
The direction of the preference $\bw^k$ indicates the goal of the agent $k$ by showing how relatively rewarding it finds the different dimensions of the cumulant features $\Phi$.
The learned magnitude of $\bw^k$ can further capture how greedily the agent $k$ pursues this goal.

A sparsity prior, i.e., $\L_{1}$ loss, on preferences $\bw^{k}$ is also used to promote disentangled cumulant dimensions (see Figure~\ref{fig:cumulants}).
We found the $\L_{1}$ loss made the algorithm more robust to the choice of dimension of $\Phi$ (see Figure \ref{fig:sensitivity-to-size-of-cumulants} in the Appendix), but did not substantially affect overall performance.

\textbf{Inverse temporal difference loss.}
~The cumulant parameters $\btheta_{\Phi}$ are trained to be TD-consistent with all agents' successor features.
This procedure inverts\footnote{Hence the name \emph{inverse TD learning}.} the standard TD-learning framework for SFs~\citep{dayan1993improving,barreto2017successor} where they are trained to be consistent with a fixed cumulant $\Phi$.
Instead, we first train the SFs and preference vectors to `explain' the other agents' behaviour with the behavioural cloning loss Eqn.~(\ref{eq:bcq-loss}), and then train $\btheta_{\Phi}$, $\btheta_{\Psi^{k}}$ to be (self-)consistent with these SFs by minimising
\begin{align}
  \L_{\text{ITD}}(\btheta_{\Phi}, &\btheta_{\Psi^{k}}) \define \E_{(\bs_{t}, \ba_{t}, \bs_{t+1}, \ba_{t+1}, k) \sim \D} \| \Psi(\bs_{t}, \ba_{t}; \btheta_{\Psi^{k}}) \hookleftarrow \nonumber \\ & - \Phi(\bs_{t}, \ba_{t}; \btheta_{\Phi}) - \underdescribe{\gamma \Psi(\bs_{t+1}, \ba_{t+1}; \tilde{\btheta}_{\Psi^{k}})}{\texttt{stop\_gradient}}  \| \,.
\label{eq:itd-loss}
\end{align}
In practice, our ITD-learning algorithm alternates between minimising $\L_{\text{BC-}Q}$ for training only the successor features and preferences, and $\L_{\text{ITD}}$ for training both the shared cumulants and successor features, provided only with no-reward demonstrations, as illustrated in blue in Figure~\ref{fig:implementation}.

From the definition of cumulants and preferences, we can recover the $k$-th demonstrator's reward, by applying Eqn.~(\ref{eq:cumulants-times-preferences}), i.e., $R^{k}(\bs, \ba) \approx \Phi(\bs, \ba; \btheta_{\Phi})^{\top} \bw^{k}$.
Our ITD algorithm returns both Q-functions that can be used for imitating a demonstrator and an explicit reward function for each agent, requiring only access to demonstrations without any online interaction with the simulator.
Hence ITD-learning is an offline multi-task IRL algorithm.
ITD is summarised in Algorithm~\ref{algo:itd}.
%
\begin{theorem}[Validity of the ITD Minimiser]
The minimisers of $\L_{\text{BC-}Q}$ and $\L_{\text{ITD}}$ are potentially-shaped cumulants that explain the observed reward-free demonstrations.
\label{thm:itd}
\end{theorem}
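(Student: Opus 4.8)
The plan is to analyse the behavioural‑cloning loss \eqref{eq:bcq-loss} and the inverse‑TD loss \eqref{eq:itd-loss} in turn and then combine them, using three elementary facts: (i) the negative log‑likelihood / cross‑entropy in \eqref{eq:bcq-loss} is a strictly proper scoring rule, so it is minimised exactly when the induced conditional matches the data‑generating conditional; (ii) the softmax map of \eqref{eq:pi-k} is invariant under adding an arbitrary state‑dependent offset to its logits; and (iii) adding a term of the form $h(\bs) - \gamma\,\E^{\C,\pi}[h(\bs')]$ to a one‑step reward adds the state‑only function $h$ to the policy‑evaluation value $Q^{\pi,\cdot}$ of \emph{every} policy $\pi$, and hence, by (ii), leaves soft‑optimality in the sense of \eqref{eq:pi-k} unchanged --- potential‑based shaping is invisible to the demonstrators' behaviour.

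Step 1 characterises the $\L_{\text{BC-}Q}$ minimiser. Writing $Q^{k}_{\btheta}(\bs,\ba) \define \Psi(\bs,\ba;\btheta_{\Psi^{k}})^{\top}\bw^{k}$, the loss \eqref{eq:bcq-loss} is the expected cross‑entropy between the empirical action distribution of demonstrator $k$ on $\D^{k}$ and $\mathrm{softmax}(Q^{k}_{\btheta}(\bs,\cdot))$. By fact (i), any minimiser has $\mathrm{softmax}(Q^{k}_{\btheta}(\bs,\cdot)) = \pi^{k}(\cdot\mid\bs)$ on $\mathrm{supp}(\D^{k})$, and by fact (ii) this pins $Q^{k}_{\btheta}$ down only up to a state offset: $Q^{k}_{\btheta}(\bs,\ba) = \log\pi^{k}(\ba\mid\bs) + g^{k}(\bs)$ there, for some $g^{k}:\S\to\mathbb{R}$. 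This is already the familiar inverse‑RL ambiguity, surfacing at the level of the $Q$‑function.

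Step 2 characterises the $\L_{\text{ITD}}$ minimiser and combines. With the contraction of $\Psi^{k}$ along $\bw^{k}$ fixed by Step 1, $\L_{\text{ITD}}$ in \eqref{eq:itd-loss} is a squared TD error with bootstrapped stop‑gradient targets sampled along demonstration trajectories generated by $P$ and $\pi^{k}$; since the minimiser of a squared error against sampled targets is the conditional mean, a minimiser satisfies the successor‑feature Bellman equation \eqref{eq:psi-k} in expectation over the demonstration distribution on $\mathrm{supp}(\D^{k})$ (granting enough capacity to realise this fixed point, else one obtains its $L^{2}$‑projection). Contracting \eqref{eq:psi-k} with $\bw^{k}$ and using \eqref{eq:cumulants-times-preferences} gives $Q^{k}_{\btheta}(\bs,\ba) = R^{k}(\bs,\ba) + \gamma\,\E^{\C,\pi^{k}}[Q^{k}_{\btheta}(\bs',\ba')]$ with $R^{k}\define\Phi(\cdot,\cdot;\btheta_{\Phi})^{\top}\bw^{k}$, so $R^{k}$ is exactly the one‑step reward whose policy‑evaluation value for $\pi^{k}$ is $Q^{k}_{\btheta}$. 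Substituting Step 1, $R^{k}(\bs,\ba) = \big(\log\pi^{k}(\ba\mid\bs) - \gamma\,\E^{\C,\pi^{k}}[\log\pi^{k}(\ba'\mid\bs')]\big) + \big(g^{k}(\bs) - \gamma\,\E^{\C,\pi^{k}}[g^{k}(\bs')]\big)$: the first bracket is a reward rendering $\pi^{k}$ soft‑optimal (its policy‑evaluation $Q$ is $\log\pi^{k}$ up to a state constant, whose softmax is $\pi^{k}$), and the second bracket is a potential‑based shaping with potential $g^{k}$, which by fact (iii) changes neither. Hence the recovered cumulants $\Phi$, paired with the $\bw^{k}$, yield for each agent a reward that is a potential shaping of a reward explaining --- i.e., making soft‑optimal --- that agent's demonstrations, which is the claim; running the argument backwards shows conversely that every such potential shaping is again a joint minimiser, so the minimiser set is precisely this shaping orbit.

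The main obstacle will be the passage in Step 2 from the empirical, single‑next‑action, bootstrapped objective actually optimised to the population successor‑feature Bellman equation \eqref{eq:psi-k}: this requires the unbiased conditional‑mean property of the squared loss (hence the on‑policy, stop‑gradient‑target structure exactly as written), a realisability/expressiveness assumption so the fixed point lies in the parametric class (without it $\Phi$ is only approximately a potential shaping), and the restriction of every conclusion to $\mathrm{supp}(\D^{k})$, since the losses constrain nothing about $\Phi$ or $\Psi^{k}$ off the observed transitions. A secondary subtlety is one of convention: under the strict maximum‑entropy notion of soft‑optimality an extra discounted successor‑state‑entropy term appears in $R^{k}$ and must be absorbed into the reference reward for the residual discrepancy to be exactly a potential shaping; stating soft‑optimality as in \eqref{eq:pi-k} avoids this.
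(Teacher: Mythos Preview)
Your proposal is correct and follows essentially the same two-step argument as the paper: first characterise the $\L_{\text{BC-}Q}$ minimiser as the demonstrator's $Q$-function up to a free state-only offset via softmax translation invariance, then read off the implied one-step reward from the Bellman/TD relation and recognise the residual offset as a potential-based shaping term $F(\bs)-\gamma F(\bs')$. The paper's version differs only cosmetically --- it posits an underlying Boltzmann expert with temperature $\nu$ and writes the recovered reward as $\tfrac{1}{\nu}\,r^{\text{expert}}$ plus shaping, whereas you work directly with $\log\pi^{k}$ --- and is less explicit than you are about the support, expectation, and realisability caveats.
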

\begin{proof}
See Appendix~\ref{app:proofs}.
\end{proof}

\textbf{Single-task setting.}
~To gain more intuition about the ITD algorithm, consider the simpler case of performing IRL with demonstrations from a single policy.
This obviates the need for a representation of preferences, so we can use $\bw=1$.
In this case $\Psi$ is the action-value function $Q$ and $\Phi$ is simply the reward $R$.
Minimising \eqref{eq:bcq-loss} reduces to finding a $Q$-function whose softmax gives the observed policy, and minimising \eqref{eq:itd-loss} finds a scalar reward that explains the $Q$-function.
Our more general formulation, with cumulants $\Phi$ in place of a scalar reward, allows us to perform ITD-learning on demonstrations from many policies, and to efficiently transfer to new tasks, as we show next.

\subsection{$\Psi \Phi$-Learning with No-Reward Demonstrations}
\label{subsec:psi-phi-learning-with-no-reward-demonstrations}

Now we present our main contribution, $\Psi \Phi$-learning,
which combines our ITD inverse RL algorithm with RL and GPI, using no-reward demonstrations from other agents to accelerating the ego-agent's learning.
$\Psi \Phi$-learning is depicted in Figure~\ref{fig:implementation} and summarised in Algorithm~\ref{algo:itd}.

$\Psi \Phi$-learning is an off-policy algorithm based on Q-learning~\citep{watkins1992q,mnih2013playing}. The action-value function is represented with successor features, $\Psi^{\text{ego}}$, and preferences, $\bw^{\text{ego}}$, as in Eqn.~(\ref{eq:Q-Psi-times-w}).
The ego-agent interacts with the environment, storing its experience in a replay buffer, $\B \leftarrow \B \cup \{ \left( \bs, \ba, \bs', r^{\text{ego}} \right) \}$.
The $\Psi \Phi$-learner also has estimates for the cumulants, per-agent SFs and preferences obtained with ITD from the demonstrations $\D$.

\textbf{Reward loss.}
~The ego-rewards, $r^{\text{ego}}$, are used to ground the cumulants $\Phi$ and the preferences $\bw^{\text{ego}}$, via the loss
\begin{align}
  \L_{\text{R}}(\btheta_{\Phi}, \bw^{\text{ego}}) \define \E_{(\bs, \ba, r^{\text{ego}}) \sim \B} &\| \Phi(\bs, \ba; \btheta_{\Phi})^{\top} \bw^{\text{ego}} - r^{\text{ego}} \| \,.
\label{eq:r-loss}
\end{align}
Importantly, we share the \emph{same} cumulants between the ITD-learning from other agents and the ego-learning, so that they span the joint space of reward functions.
This can be also seen as a representation learning method, where by enforcing all agents, including the ego-agent, to share the same $\Phi$, we transfer information about salient features of the environment from learning about one agent to benefit learning about all agents.

\textbf{Temporal difference learning.}
~The ego-agent's successor features are learned using two losses. First, we train the SFs to fit the Q-values using the Bellman error
\begin{align}
  \mathcal{L}_{Q}(\btheta_{\Psi^{\text{ego}}}) \define & \E_{(\bs, \ba, \bs', r^{\text{ego}}) \sim \B} \| \Psi(\bs, \ba; \btheta_{\Psi^{\text{ego}}})^{\top} \bw^{\text{ego}} \hookleftarrow \nonumber \\ & - r^{\text{ego}} 
  -\underdescribe{ \gamma \max_{a'} \Psi(\bs', \ba'; \tilde{\btheta}_{\Psi^{\text{ego}}})^\top \bw^{\text{ego}}}{\texttt{stop-gradient}} \| \,.
\label{eq:q-loss}
\end{align}

\vspace{4em}
We additionally train the successor features to be self-consistent (i.e. to satisfy Equation~\ref{eq:sf-definition}) using a TD loss $\mathcal{L}_{\text{TD-}\Psi}$. 
\begin{align}
      \mathcal{L}_{\text{TD-}\Psi}(\btheta_{\Psi^{\text{ego}}}) \define & \E_{(\bs, \ba, \bs', \ba') \sim \B} \| \Psi(\bs, \ba; \btheta_{\Psi^{\text{ego}}}) \hookleftarrow \nonumber \\ & - \Phi(\bs, \ba; \tilde{\btheta}_{\Phi}) 
  - \gamma \underdescribe{\Psi(\bs', \ba'; \btheta_{\Psi^{\text{ego}}})}{\texttt{stop-gradient}}\| \,.
\label{eq:psi-loss}
\end{align}

\textbf{GPI behavioural policy.}
~Provided SFs estimates for the other agents, $\{\btheta_{\Psi^{k}}\}_{k=1}^{K}$, and the ego-agent $\btheta_{\Psi^{\text{ego}}}$, and inferred ego preferences, $\bw^{\text{ego}}$, we adopt an action selection mechanism according to the GPI rule in Eqn.~(\ref{eq:gpi-with-sf})
\begin{align}
  \pi^{\text{ego}}(\bs) = \argmax_{a} \max_{\btheta_{\Psi}} \Psi(\bs, a; \btheta_{\Psi})^{\top} \bw^{\text{ego}} \,.
\label{eq:pi-ego}
\end{align}
The GPI step lets the agent estimate the value of the demonstration policies on its current task, and then copy the policy that it predicts will be most useful.
We combat model overestimation by acting pessimistically with regard to an ensemble of two successor features approximators.
If the agent's estimated values are accurate and the demonstration policies are useful for the ego task, the GPI policy can obtain good performance faster than policy iteration with only the ego value function.
The next section quantifies this claim.

\begin{figure*}
  \centering
  \begin{subfigure}[b]{0.26\linewidth}
    \centering
    \includegraphics[width=\linewidth]{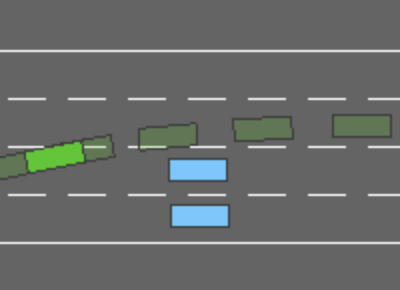}
    \caption{Highway}
  \end{subfigure}
  ~
  \begin{subfigure}[b]{0.238\linewidth}
    \centering
    \includegraphics[width=\linewidth]{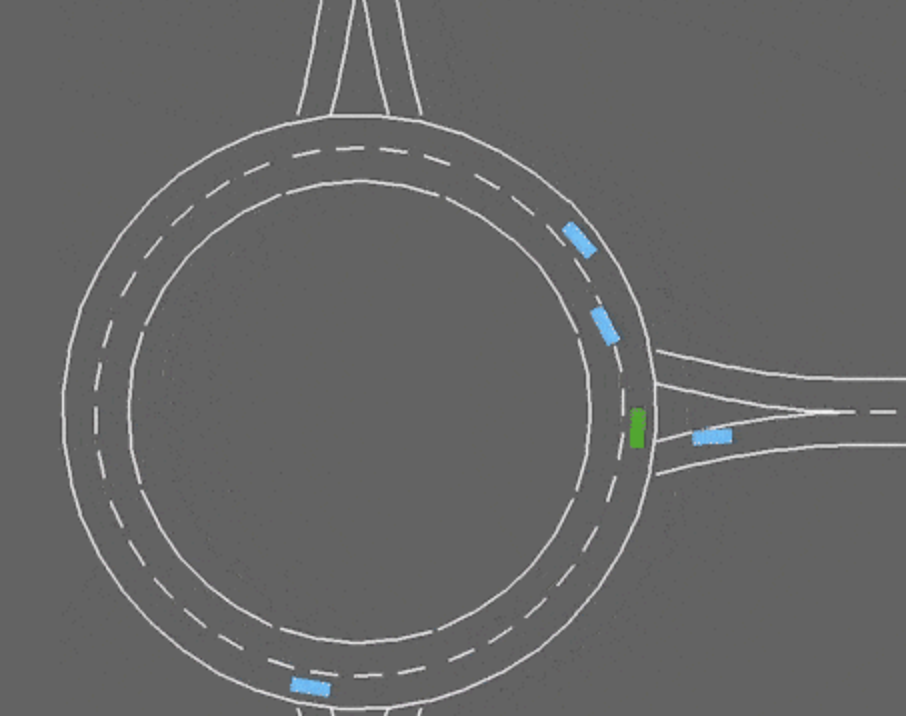}
    \caption{Roundabout}
    \label{fig:envs_roundabout}
  \end{subfigure}
    ~
    \begin{subfigure}[b]{0.19\linewidth}
    \centering
    \includegraphics[width=\linewidth]{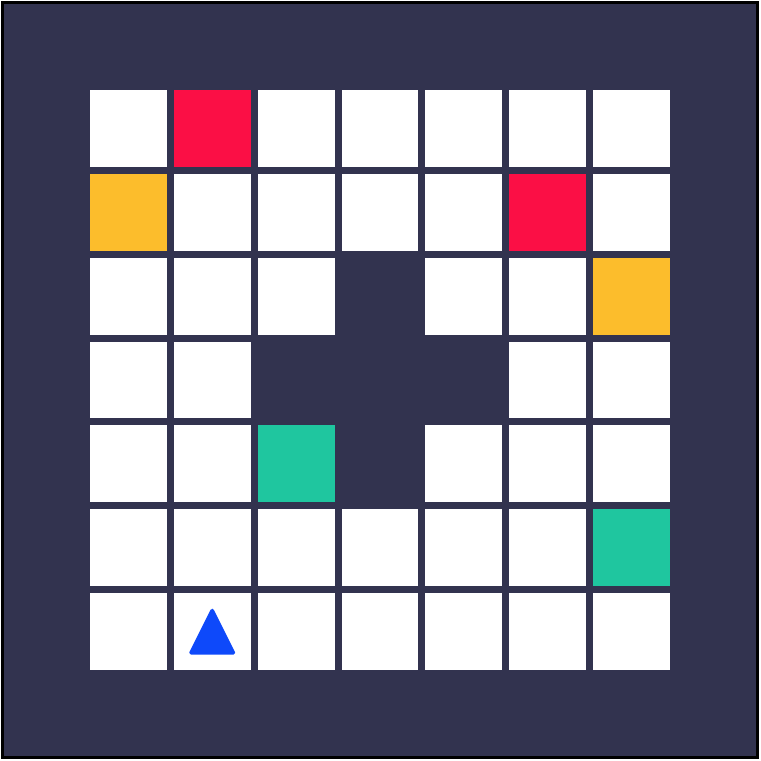}
    \caption{CoinGrid}
  \end{subfigure}
  ~
  \begin{subfigure}[b]{0.189\linewidth}
    \centering
    \includegraphics[width=\linewidth]{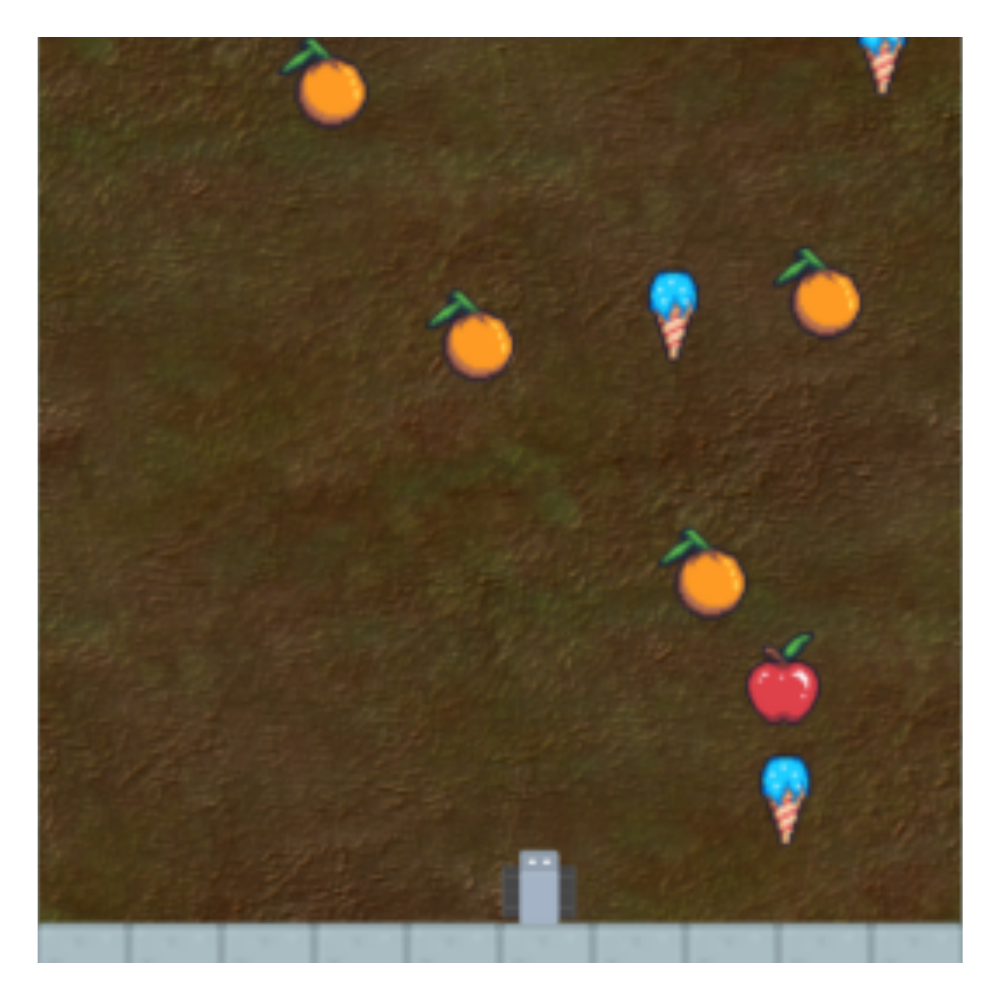}
    \caption{Fruit Bot}
  \end{subfigure}
  \caption{
    Environments studied in this paper. Environments (a-b) are multi-agent environments in which the ego-agent must learn online from other agents, and learn to navigate around other agents in the environment (see Section~\ref{sec:results_accelerate_rl}).
    Environments (c-d) are single-agent.
    In environment (c) we test whether the ego-agent can learn offline from a set of demonstrations previously collected by other agents (see Section~\ref{sec:results_transfer}. Environment (d) is used to test whether our method can scale to more complex, high-dimensional tasks (see Section~\ref{sec:results_accelerate_rl}).
  }
  \label{fig:envs}
  \vspace{-0.5em}
\end{figure*}

\textbf{Performance Bound.}
~Given a set of demonstration task vectors $\{w_k\}_{k=1}^{K}$ and successor features for the corresponding optimal policies $\{ \Psi^{\pi^{k}} \}_{k=1}^{K}$, \citet{barreto2017successor} show that it is possible to bound the performance of the GPI policy on the ego-agent task $w'$ as a function of the distance of $w'$ from the closest demonstration task $w_j$, and the value approximation error of the predicted value functions $\widetilde{Q}^{\pi_i} = \Psi^{\pi_i} w'$.
We extend this result to explicitly account for the reward approximation error $\delta_r$ obtained by the the learned cumulants  and the SF approximation error $\delta_\Psi$.
\begin{theorem}[Generalisation Bound of $\Psi \Phi$-Learning]
Let $\pi^*$ be the optimal policy for the ego task $w'$ and let $\pi$ be the GPI policy obtained from $\{\tilde{Q}^{\pi_i}\}$, with $\delta_r, \delta_\Psi$ the reward and successor feature approximation errors. Then $\forall s,a$
\begin{align}
     Q^{*}(s,a) - Q^\pi(s,a) \leq  &\frac{2}{1-\gamma} \bigg [ \phi_{\max} \min_{j} \| w' - w_j\| \hookleftarrow \nonumber \\ & + 2 \delta_r + \|w'\| \delta_{\Psi} + \frac{\delta_r}{1-\gamma} \bigg] \; .
\end{align}
\label{thm:gpi}
\end{theorem}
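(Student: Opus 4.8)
The plan is to reduce the statement to the GPI machinery of \citet{barreto2017successor} — specifically their generalised-policy-improvement-under-approximation inequality together with their single-policy sub-optimality lemma — while carefully accounting for two new sources of slack: the demonstrator value functions plugged into GPI are predicted through learned (hence imperfect) successor features and cumulants, and the ``tasks'' we reason about, $\Phi(\cdot;\btheta_\Phi)^\top w$, are only approximations of the true rewards. Write $\rego$ for the ego reward, $\pi$ for the GPI policy of eq.~\eqref{eq:pi-ego}, $\pi_i$ for the demonstrator policies with inferred preferences $w_i$, $\tilde Q^{\pi_i}(s,a)=\Psi(s,a;\btheta_{\Psi^i})^\top w'$ for the predicted demonstrator values, $\delta_r$ for a uniform bound on the reward-reconstruction error $|r-\Phi(\cdot;\btheta_\Phi)^\top w|$ over the ego and demonstrator (reward, preference) pairs, $\delta_\Psi$ for a uniform bound on $\|\Psi(s,a;\btheta_{\Psi^i})-\Psi^{\pi_i}(s,a)\|$ with $\Psi^{\pi_i}$ the \emph{exact} SF of $\pi_i$ under $\Phi(\cdot;\btheta_\Phi)$, and $\phi_{\max}=\max_{s,a}\|\Phi(s,a;\btheta_\Phi)\|$. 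I will use throughout the elementary reward-perturbation fact that if two rewards differ by at most $\delta$ in sup-norm then the optimal values, and the values of any fixed policy, differ by at most $\delta/(1-\gamma)$.

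The first step is the GPI-under-approximation inequality of \citet{barreto2017successor}: if $\pi$ is greedy with respect to $\max_i\tilde Q^{\pi_i}$ and $\epsilon_V$ is a uniform bound on $|\tilde Q^{\pi_i}(s,a)-Q^{\pi_i,\rego}(s,a)|$ (the error of the prediction against the \emph{true} value of $\pi_i$ on the ego task), then $Q^{*}(s,a)-Q^{\pi}(s,a)\le \min_i\big(Q^{*}(s,a)-Q^{\pi_i,\rego}(s,a)\big)+\tfrac{2}{1-\gamma}\epsilon_V$. The second step bounds $\epsilon_V$. Starting from $\tilde Q^{\pi_i}(s,a)=\Psi(s,a;\btheta_{\Psi^i})^\top w'$, replacing the learned SF by the exact one costs at most $\|w'\|\delta_\Psi$ (Cauchy--Schwarz); then eq.~\eqref{eq:sf-definition} gives the exact identity $\Psi^{\pi_i}(s,a)^\top w'=Q^{\pi_i,\hat r}(s,a)$ with $\hat r=\Phi(\cdot;\btheta_\Phi)^\top w'$; and finally, since $\|\hat r-\rego\|_\infty\le\delta_r$, the reward-perturbation fact gives $|Q^{\pi_i,\hat r}(s,a)-Q^{\pi_i,\rego}(s,a)|\le\delta_r/(1-\gamma)$. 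Hence $\epsilon_V\le\|w'\|\delta_\Psi+\delta_r/(1-\gamma)$.

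The third step handles the per-policy term $Q^{*}(s,a)-Q^{\pi_i,\rego}(s,a)$. Each $\pi_i$ is (soft-)optimal for the reward it was trained to explain, and by Theorem~\ref{thm:itd} that reward coincides, up to potential shaping — which leaves the optimal policy and the ordering of policies unchanged — and a residual of size $\le\delta_r$, with $\Phi(\cdot;\btheta_\Phi)^\top w_i$; so Lemma~1 of \citet{barreto2017successor} (the sub-optimality on one task of a policy optimal for a nearby task) yields $Q^{*}(s,a)-Q^{\pi_i,\rego}(s,a)\le\tfrac{2}{1-\gamma}\|\rego-r_i\|_\infty$ up to a lower-order soft-optimality gap. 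A triangle inequality then gives $\|\rego-r_i\|_\infty\le\|\rego-\Phi^\top w'\|_\infty+\|\Phi^\top(w'-w_i)\|_\infty+\|\Phi^\top w_i-r_i\|_\infty\le\delta_r+\phi_{\max}\|w'-w_i\|+\delta_r$. Taking the minimum over $i$ and combining with steps one and two gives $Q^{*}(s,a)-Q^{\pi}(s,a)\le\tfrac{2}{1-\gamma}\big(\phi_{\max}\min_i\|w'-w_i\|+2\delta_r\big)+\tfrac{2}{1-\gamma}\big(\|w'\|\delta_\Psi+\tfrac{\delta_r}{1-\gamma}\big)$, which is exactly the claimed inequality after grouping the bracket.

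The main obstacle I expect is making the third step rigorous. Two things need care there: (i) since the demonstrations carry no rewards, the bound ``$\|\Phi^\top w_i-r_i\|_\infty\le\delta_r$'' is not something one can measure — it has to be extracted from Theorem~\ref{thm:itd}, and one must argue that the potential-shaping ambiguity it allows does not affect the value gaps that enter Lemma~1, while still paying $\delta_r$ for the fitting residual; and (ii) the demonstrators are \emph{soft}-optimal rather than optimal for that reward, so Lemma~1 applies only up to the entropy-induced gap, which one either assumes negligible or folds into $\delta_r$. The remaining ingredients — the GPI-under-approximation inequality, the successor-feature identity of eq.~\eqref{eq:Q-Psi-times-w}, and the reward-perturbation fact — are standard and are imported essentially verbatim from \citet{barreto2017successor}.
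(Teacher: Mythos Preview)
Your proof is correct and essentially identical to the paper's: both invoke \citet{barreto2017successor}'s Theorem~1 and Lemma~1, decompose the value-approximation error $\epsilon$ as $\|w'\|\delta_\Psi + \delta_r/(1-\gamma)$ (the paper isolates the second piece as a standalone lemma), and bound $\|r'-r_j\|_\infty$ via the same triangle inequality to get $\phi_{\max}\|w'-w_j\|+2\delta_r$. The obstacles you flag in your final paragraph --- potential shaping and soft-optimality --- are genuine subtleties, but the paper's formal appendix statement simply \emph{assumes} them away by taking as hypotheses that each $\pi_i$ is optimal for some $r_i$ with $\|\Phi w_i - r_i\|_\infty < \delta_r$, so you need not extract these from Theorem~\ref{thm:itd}.
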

\begin{proof}
See Appendix~\ref{app:proofs} for a formal statement.
\end{proof}
In settings where the agent has a good reward and SFs approximation, and the ego task vector $w'$ is close to the demonstration tasks, Theorem~\ref{thm:gpi} says that the ego-agent will attain near-optimal performance from the start of training.

\section{Experiments}
\label{sec:experiments}

We conduct a series of experiments to determine how well $\Psi \Phi$-learning functions as an RL, IRL, imitation learning, and transfer learning algorithm.

\textbf{Baselines.}
~We benchmark against the following methods: (i) \textbf{DQN:} Deep Q-learning \cite{mnih2013playing}, (ii) \textbf{BC:} Behaviour Cloning, a simple imitation learning method in which we learn $p(\ba | \bs)$ via supervised learning on the demonstration data, (iii) \textbf{DQN+BC-AUX:} DQN with an additional behavior-cloning auxiliary loss \cite{hernandez2019agent}, (iv) \textbf{GAIL:} Generative Adversarial Imitation Learning \cite{ho2016generative}, which uses a GAN-like approach to approximate the expert policy, and (v) 
\textbf{SQILv2}: Soft Q Imitation Learning \cite{reddy2019sqil}, a recently proposed imitation technique that combines imitation and RL, and works in the absence of rewards.
For high-dimensional environments, we replace DQN with \textbf{PPO}, Proximal Policy Optimization \cite{schulman2017proximal}. Both DQN and PPO are trained to optimize environment reward through experience, and do not have access to other agents' experiences.

\subsection{Environments}
Experiments are conducted using four environments, shown in Figure \ref{fig:envs}. We cover a broad range of problem setting, including both multi-agent and single-agent environments, as well as learning online during RL training, or offline from previously collected demonstrations.

\textbf{Highway}~\citep{highway-env} is a multi-agent autonomous driving environment in which the ego-agent must safely navigate around other cars and reach its goal.
The other agents follow near-optimal scripted policies for various goals, depending on the scenario.
In the \textbf{single-task} scenario, other agents have the same objective as the ego-agent, so their experience is directly relevant.
In the \textbf{adversarial task}, the other agents do not move, and the ego-agent has to accelerate and go to a particular lane while avoiding other vehicles.
Finally, in the \textbf{multi-task} scenario, the other agents and ego-agent have different preferences over target speed, preferred lane, and following distance. We consider the multi-task scenario to be the most realistic and representative of real highway driving with human drivers. In addition to highway driving, we also study the more complex \textbf{Roundabout} task.
Roundabout is inherently multi-task, in that other agents randomly exit either the first or second exit, while the ego-agent must learn to take the third exit.

\textbf{CoinGrid} is a single-agent grid-world, environment containing goals of different colours. We collect offline trajectories of pre-trained agents with preferences for different goals. red.
During training, the ego-agent is only rewarded for collecting a subset of the possible goals. We can then test how well the ego-agent is able to transfer to a goal that was never experienced during training (Section \ref{sec:results_zeroshot}). This environment also enables learning easily interpretable preference vectors, allowing us to visualize how well our method works as an IRL method for inferring rewards (Section \ref{sec:results_irl}).

\textbf{FruitBot} is a high-dimensional, procedurally generated, single-agent environment from the OpenAI ProcGen~\citep{cobbe2020leveraging} suite.
We use FruitBot to test whether $\Psi \Phi$-learning can scale up to more complex RL environments, requiring larger deep neural network architectures that learn directly from pixels.
The agent must navigate around randomly generated obstacles while collecting fruit, and avoiding other objects and walls.
To create a multi-task version of FruitBot, we define additional tasks which vary agents' preferences over collecting objects in the environment, and train PPO baselines on these task variants.
The ego-agent observes the states and actions of these trained agents playing the game in parallel with its own interactions.

\begin{figure*}
  \centering
  \includegraphics[width=0.6\linewidth]{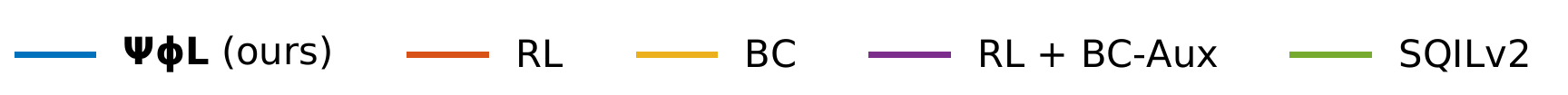}
  
  \begin{subfigure}[b]{0.24\linewidth}
    \centering
    \includegraphics[width=\linewidth]{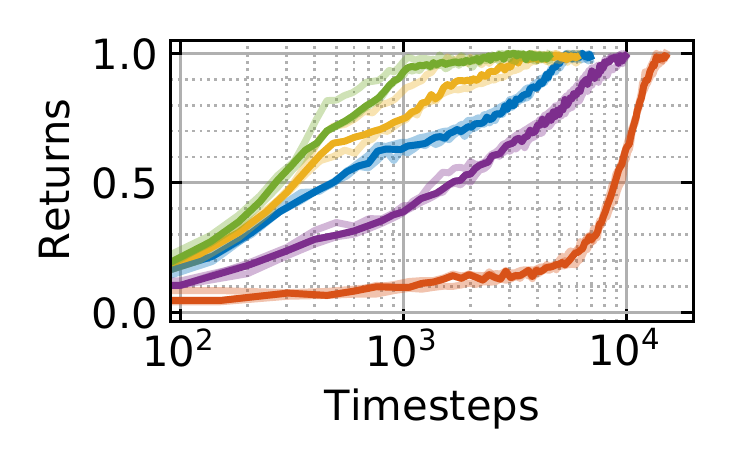}
    \caption{Highway: Single-task}
    \label{fig:results_curves_singletask}
  \end{subfigure}
  \begin{subfigure}[b]{0.24\linewidth}
    \centering
    \includegraphics[width=\linewidth]{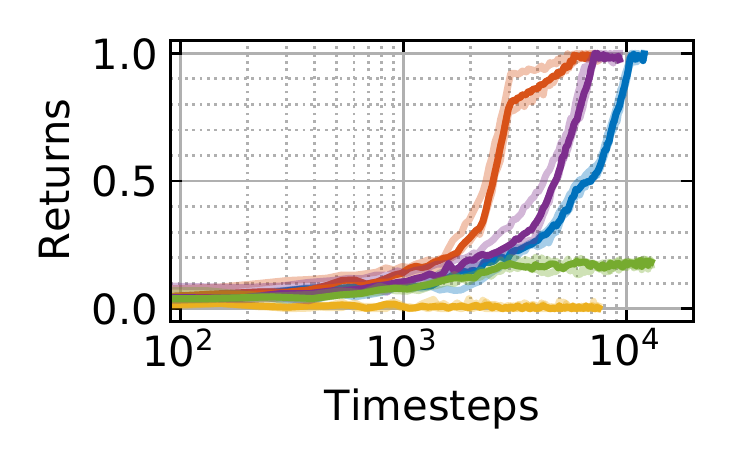}
    \caption{Highway: Adversarial}
    \label{fig:results_curves_adversarial}
  \end{subfigure}
  \begin{subfigure}[b]{0.24\linewidth}
    \centering
    \includegraphics[width=\linewidth]{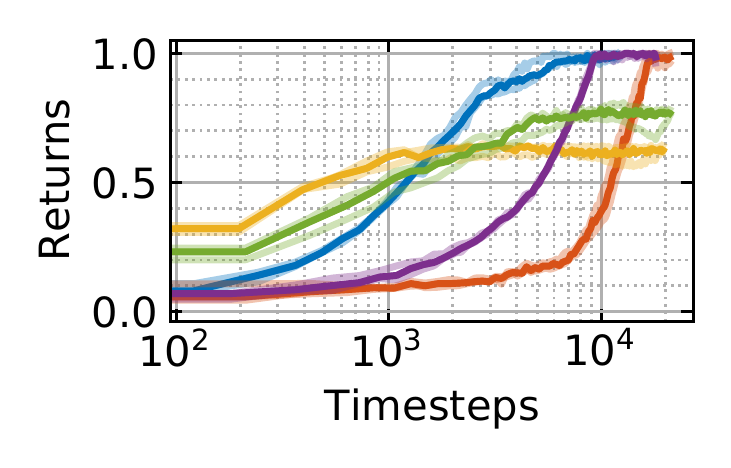}
    \caption{Highway: Multi-task}
    \label{fig:results_curves_multitask}
  \end{subfigure}
  \begin{subfigure}[b]{0.24\linewidth}
    \centering
    \includegraphics[width=\linewidth]{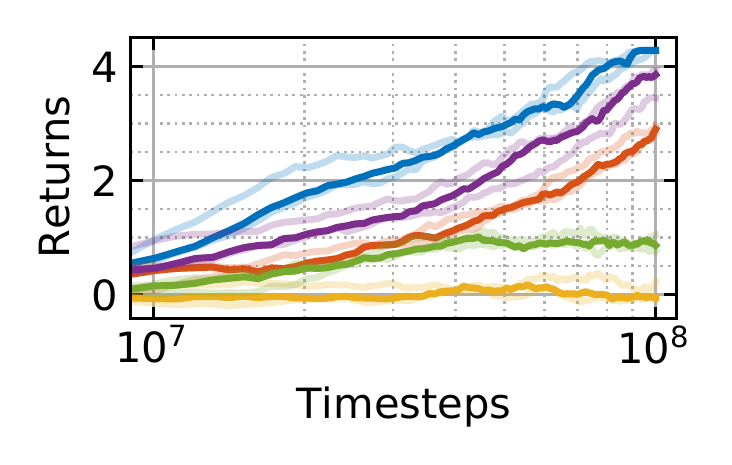}
    \caption{FruitBot}
    \label{fig:results_curves_fruitbot}
  \end{subfigure}

  \caption{
    Learning curves for $\Psi \Phi$-learning and baselines in three tasks in the multi-agent Highway environment (a-c), and in single-agent FruitBot (d).
    Tasks (a) and (b) represent extreme cases where either RL or imitation learning is irrelevant.
    In (a) other agents have the same task as the ego-agent, so imitation learning excels.
    In the adversarial task (b), other agents exhibit degenerative behaviour, so imitation learning performs extremely poorly and traditional RL (DQN) excels.
    In both of these extreme cases, $\Psi \Phi$-learning achieves good performance, showing it can flexibly reap the benefits of either imitation or RL as appropriate.
    Task (c) is most realistic; here, other agents have varied preferences and goals that may or may not relate to the ego-agent's task.
    $\Psi \Phi$-learning clearly outperforms baseline techniques. Similar results are shown in Fruitbot (d), showing that $\Psi \Phi$-learning scales well to high-dimensional environments, consistently outperforming baselines like PPO and SQIL.
    We plot mean performance over 3 runs and individual runs with alpha.
  }
  \label{fig:social_learning}
  \vspace{-1.5em}
\end{figure*}
\subsection{Accelerating RL with No-Reward Demonstrations}
\label{sec:results_accelerate_rl}
This section addresses two hypotheses:
\textbf{H1:} When the unlabelled demonstrations are relevant, $\Psi \Phi$-learning can accelerate or improve performance of the ego-agent when learning with online RL; and
\textbf{H2:} If the demonstrations are irrelevant, biased, or are generated by sub-optimal demonstrators, $\Psi \Phi$-learning can perform at least as well as standard RL. 

Figure \ref{fig:social_learning} shows the results of $\Psi \Phi$-learning and the baselines in the Highway and FruitBot environments. In the single-task scenario (\ref{fig:results_curves_singletask}), when other agents' experience is entirely relevant to the ego-agent's task, imitation learning methods like BC and SQILv2 learn fastest. DQN learns slowly because it does not use the other agents' experience. However, $\Psi \Phi$-learning achieves competitive results, out-performing DQN+BC-Aux \cite{hernandez2019agent,ndousse2020multi}.
In the adversarial task (\ref{fig:results_curves_adversarial}), the other agents' behaviors are irrelevant for the ego-agent's task, so imitation learning (BC and SQILv2) performs poorly, while traditional RL techniques (DQN and DQN+BC-Aux) perform best.
The performance of $\Psi \Phi$-learning does not suffer like other imitation learning methods; instead, it retains the performance of standard RL (\textbf{H2}).
$\Psi \Phi$-learning can flexibly reap the benefits of either imitation learning or RL, depending on what is most beneficial for the task.

The multi-task scenario (\ref{fig:results_curves_multitask}) is the most realistic autonomous driving task, in which other agents navigate the highway with varying driving styles.
Here, $\Psi \Phi$-learning clearly out-performs all other methods, suggesting it can leverage information about other agents' preferences in order to learn the underlying task structure of the environment, acclerating performance on the ego-agent's RL task (\textbf{H1)}. FruitBot (\ref{fig:results_curves_fruitbot}) gives consistent results, showing that $\Psi \Phi$-learning scales well to high-dimensional, single-agent tasks while still outperforming BC, SQIL, PPO, and PPO+BC-Aux.

\subsection{Inverse Reinforcement Learning}
\label{sec:results_irl}

We now test hypothesis \textbf{H3:} ITD is an effective IRL method, and can accurately infer other agents' rewards.
We present a quantitative and qualitative study of the rewards for other agents that are inferred by ITD, as well as the learned cumulants and preferences.
Here, we focus solely on offline IRL and use only ITD to learn from offline reward-free demonstrations, without any ego-agent experience.
\begin{table}
  \centering
  \caption{
    We evaluate how well $\Psi \Phi$-learning is able to infer the correct reward function by training an RL agent on the inferred rewards, and comparing this to alternative imitation learning methods in three environments.
    All methods are trained on expert demonstrations.
    A ``$\diamondsuit$'' indicates methods that infer an \emph{explicit} reward function and then use one of DQN or PPO to train an RL agent, depending on the environment.
    A ``$\clubsuit$'' indicates methods that directly learn a policy from demonstrations.
    A ``$\dagger$'' indicates methods that use privileged task id information for handling multi-task demonstrations.
    We report mean and standard error of \emph{normalised returns} over 3 runs, where higher-is-better and the performance is upper bounded by $1.0$, reached by the same RL agent, trained with the ground truth reward function.
  }
  \label{tab:irl}
  \resizebox{\linewidth}{!}{
  \begin{tabular}{lc|c|c}
  \toprule
  \textbf{Methods} & \texttt{Roundabout}$^{\texttt{DQN}}$ & CoinGrid$^{\texttt{DQN}}$ & \texttt{FruitBot}$^{\texttt{PPO}}$ \\
  \midrule
  BC$^{\dagger \clubsuit}$~\citep{pomerleau1989alvinn}    &
    $0.81{\color{black!50}\pm0.02}$ &
    $0.69{\color{black!50}\pm0.06}$ &
    \bftab 0.37${\color{black!50}\pm0.02}$ \\
  SQIL$^{\dagger \clubsuit}$~\citep{reddy2019sqil}        &
    $0.85{\color{black!50}\pm0.02}$ &
    $0.64{\color{black!50}\pm0.05}$ &
    \bftab 0.35${\color{black!50}\pm0.03}$ \\
  \midrule
  GAIL$^{\dagger \diamondsuit}$~\citep{ho2016generative}     &
    $0.77{\color{black!50}\pm0.07}$ &
    $0.73{\color{black!50}\pm0.02}$ &
    $0.31{\color{black!50}\pm0.02}$ \\
  \rowcolor{ourmethod} ITD$^{\diamondsuit}$ (ours, cf. Section~\ref{subsec:inverse-temporal-difference-learning}) &
    \bftab 0.92${\color{black!50}\pm0.01}$ &
    \bftab 0.77${\color{black!50}\pm0.03}$ &
    \bftab 0.35${\color{black!50}\pm0.04}$\\
  \bottomrule
  \end{tabular}
  }
\vspace{-0.5em}
\end{table}

To quantitatively evaluate how well ITD can infer rewards, we train an RL agent on the inferred reward function, and compare the performance to other imitation learning and IRL methods. Table \ref{tab:irl} gives the performance in terms of normalised returns on all three environments.
Using ITD to infer rewards results in significantly higher performance than BC and SQIL, in two environments, and competitive performance in FruitBot. We note that unlike $\Psi \Phi$-learning, BC and SQIL directly learn a policy from demonstrations, and do not actually infer an explicit reward function. In contrast, GAIL does infer an explicit reward function, and ITD gives consistently higher performance than GAIL in all three environments. These results demonstrate that ITD is an effective IRL technique
(\textbf{H3}).

Qualitatively, we can evaluate how well the cumulants inferred by ITD in the CoinGrid environment span the space of possible goals.
We compute the learned cumulants $\hat{\phi}(s)$ for each square $s$ in the grid. 
Figure \ref{fig:cumulants_coingrid} shows the original CoinGrid game, and Figure \ref{fig:cumulants1}-\ref{fig:cumulants3} 
shows the first three dimensions of the learned cumulant vector, $\hat{\phi}_1$-$\hat{\phi}_3$ (the rest are given in the Appendix). We find that $\hat{\phi}_1$ is most active for red coins, $\hat{\phi}_2$ for green, and $\hat{\phi}_3$ for yellow.
Clearly, ITD has learned cumulant features that span the space of goals for this game.
See Appendix~\ref{app:visualisations} for more details and visualisations of the learned rewards and preferences.
\begin{figure}[!h]
  \centering
  \begin{subfigure}[b]{0.24\linewidth}
    \centering
    \includegraphics[width=\textwidth]{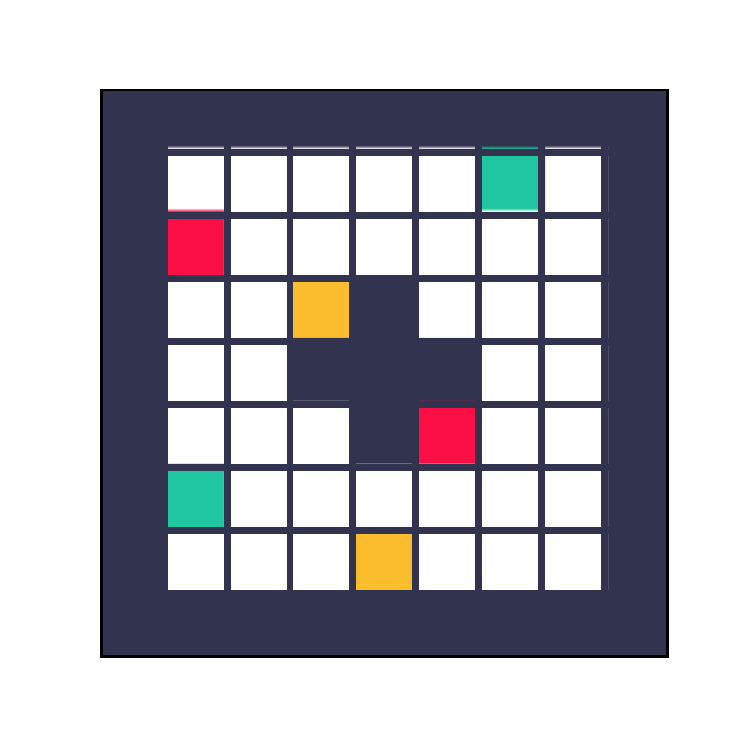}
    \caption{CoinGrid}
    \label{fig:cumulants_coingrid}
  \end{subfigure}%
  \begin{subfigure}[b]{0.24\linewidth}
    \centering
    \includegraphics[width=\textwidth]{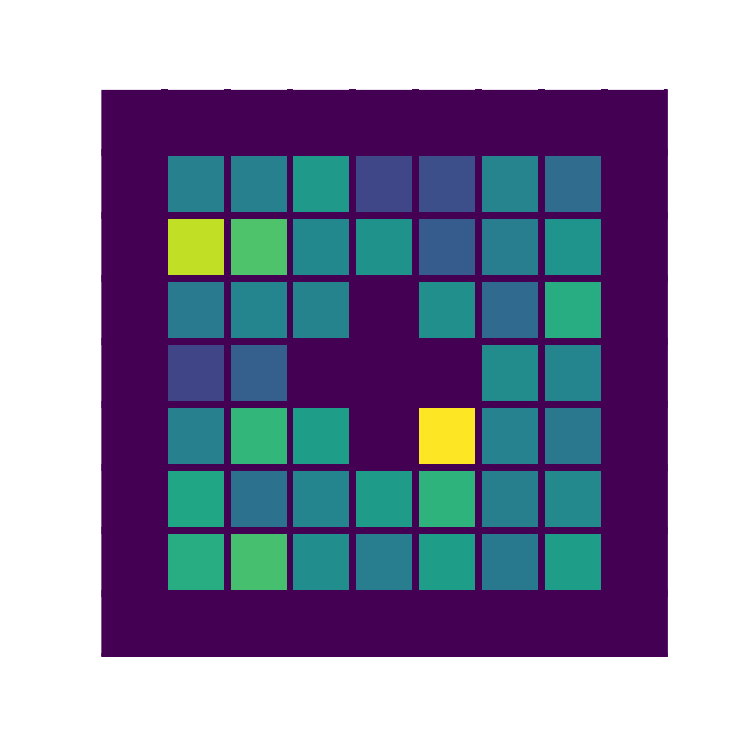}
    \caption{$\phi_{1}$}
    \label{fig:cumulants1}
  \end{subfigure}
  \begin{subfigure}[b]{0.24\linewidth} 
    \centering
    \includegraphics[width=\textwidth]{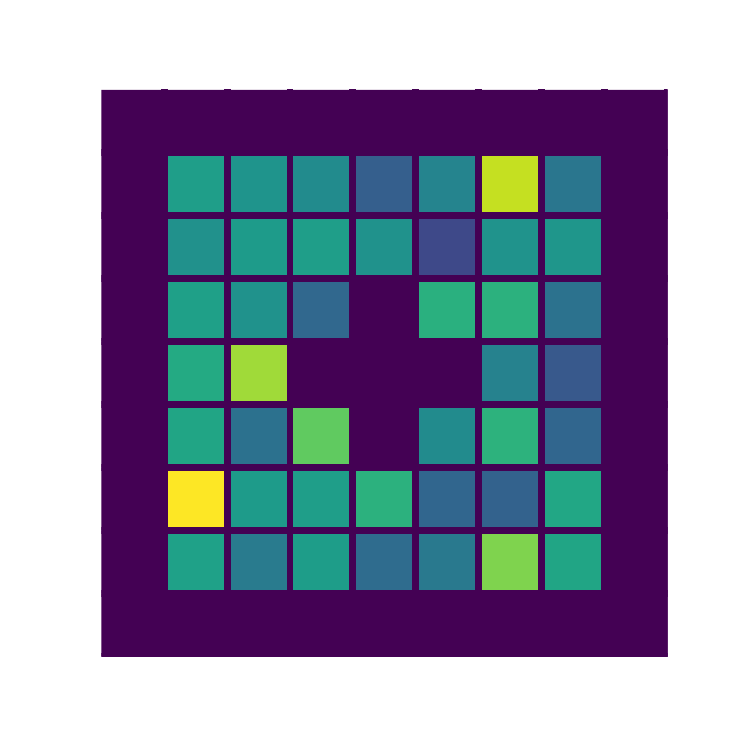}
    \caption{$\phi_{2}$}
  \end{subfigure}
  \begin{subfigure}[b]{0.24\linewidth}
    \centering
    \includegraphics[width=\textwidth]{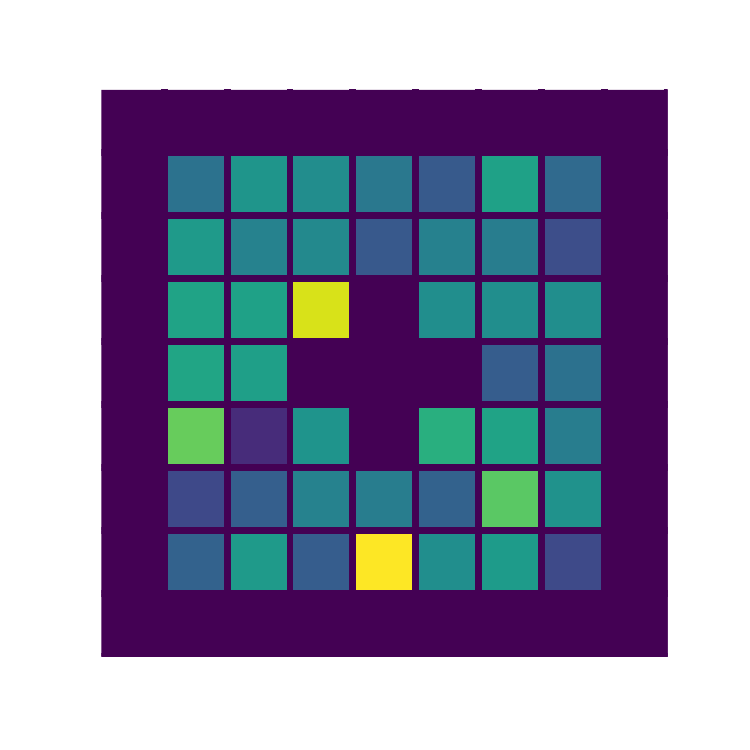}
    \caption{$\phi_{3}$}
    \label{fig:cumulants3}
  \end{subfigure}
  \label{fig:irl_fourrooms}
  \caption{
    Qualitative evaluation of the learned cumulants in the CoinGrid task.
    Cumulants $\phi_{1}$, $\phi_{2}$, and $\phi_{3}$ seem to capture the red, green, and yellow blocks, respectively.
    Therefore, linear combinations of the learned cumulants can represent arbitrary rewards in the environment, which involve stepping on the coloured blocks.
  }
  \label{fig:cumulants}
\end{figure}

\subsection{Imitation learning}
\label{sec:results_imitate}
Here, we investigate hypothesis \textbf{H4:} $\Psi \Phi$-learning works as an effective imitation learning method, allowing for accurate prediction of other agents' actions. To test this hypothesis, we train other agents in the Roundabout environment, then split no-reward demonstrations from these agents into a train dataset (80\%) and a held-out test dataset. We use the train dataset to run ITD, which means that we use the data to learn both $\Phi$ and the $\Psi$ and $\bw$ for other agents. Because it is specifically designed to accurately predict other agents' actions, we use BC as the baseline. We compare this to using only ITD, and using the full $\Psi \Phi$-learning algorithm including ITD \textit{and} learning from RL and experience to update the shared cumulants $\Phi$. 

Accuracy in predicting other agents' actions on the held-out test set is used to measure imitation learning performance.
Figure \ref{fig:predict_others_actions} shows accuracy over the course of training.
At each phase change marked in the figure, the ego-agent is given a new task, to test how the representation learning benefits from diverse ego-experience.
We see that although BC obtains accurate train performance, it generalises poorly to the test set, reaching little over 80\% accuracy.
Without RL, $\Psi \Phi$-learning achieves similar performance. However, when using RL to improve imitation, $\Psi \Phi$-learning performs well on both the train and test set, achieving markedly higher accuracy ($\approx 95$\%) in predicting other agents' behaviour. This suggests that when $\Psi \Phi$-learning uses RL and interaction with the world to improve the estimation of the shared cumulants $\Phi$, this in turn improves its ability to model the $Q$ function of other agents and predict their behaviour. Further, $\Psi \Phi$-learning adapts well when the agent's goal changes, since it uses SFs to disentangle the representation of an agent's goal from environment dynamics. Taken together, these results demonstrate that $\Psi \Phi$-learning also works as a competitive imitation learning method (\textbf{H4}).

\begin{table*}[!h]
  \centering
  \caption{
    We evaluate how well $\Psi \Phi$-learning is able to transfer to new tasks in a few-shot fashion.
    We construct a multi-task variant of the CoinGrid environment: The ego-agent is provided demonstrations for either capturing only red coins \texttt{R} or only green coins \texttt{G}.
    Then it is evaluated on 4 different tasks: collecting (i) both red and green coins \texttt{R+G}, (ii) collecting red and avoiding green coins \texttt{R-G}, (iii) avoiding red and collecting green coins \texttt{-R+G} and (iv) avoiding both red and green coins \texttt{-R-G}.
    A ``$\diamondsuit$'' indicates methods that use a single model for all tasks, while ``$\clubsuit$'' indicates methods that require one model per task, i.e., they comprise of 4 models. Because it disentangles preferences from task representation, $\Psi \Phi$-learning is able to adapt to reach optimal performance on the new tasks after a single episode or improve intra-episode from the first episode after experiencing the first rewards. In contrast, SQIL takes $100$ episodes to adapt.
  }
  \label{tab:few-shot}
  \resizebox{\linewidth}{!}{
  \begin{tabular}{lrrrr|rrrr|rrrr}
  \toprule
                                        &
  \multicolumn{4}{c}{0-shot}            &
  \multicolumn{4}{c}{1-shot}            &
  \multicolumn{4}{c}{100-shot}          \\
  \cline{2-5} \cline{6-9} \cline{10-13} \\
  \textbf{Methods}      &
  \texttt{R+G}          &
  \texttt{R-G}          &
  \texttt{-R+G}         &
  \texttt{-R-G}         &
  \texttt{R+G}          &
  \texttt{R-G}          &
  \texttt{-R+G}         &
  \texttt{-R-G}         &
  \texttt{R+G}          &
  \texttt{R-G}          &
  \texttt{-R+G}         &
  \texttt{-R-G}         \\
  \midrule
  SQILv2$^{\clubsuit}$~\citep{reddy2019sqil}                &
    $1.0{\color{black!50}\pm0.0}$                           &
    $0.0{\color{black!50}\pm0.0}$                           &
    $0.0{\color{black!50}\pm0.0}$                           &
    $-1.0{\color{black!50}\pm0.0}$                          &
    \bftab 1.0${\color{black!50}\pm0.0}$                    &
    $0.0{\color{black!50}\pm0.0}$                           &
    $0.0{\color{black!50}\pm0.0}$                           &
    $-1.0{\color{black!50}\pm0.0}$                          &
    \bftab 1.0${\color{black!50}\pm0.0}$                    &
    \bftab 1.0${\color{black!50}\pm0.0}$                    &
    \bftab 1.0${\color{black!50}\pm0.0}$                    &
    \bftab 1.0${\color{black!50}\pm0.0}$                    \\
  \rowcolor{ourmethod} $\Psi \Phi$-learning$^{\diamondsuit}$ (ours, cf. Section~\ref{subsec:psi-phi-learning-with-no-reward-demonstrations}) &
    $1.0{\color{black!50}\pm0.0}$                           &
    \bftab 0.2${\color{black!50}\pm0.1}$                    &
    \bftab 0.2${\color{black!50}\pm0.1}$                    &
    \bftab $-$0.4${\color{black!50}\pm0.2}$                   &
    \bftab 1.0${\color{black!50}\pm0.0}$                    &
    \bftab 1.0${\color{black!50}\pm0.0}$                    &
    \bftab 1.0${\color{black!50}\pm0.0}$                    &
    \bftab 1.0${\color{black!50}\pm0.0}$                    &
    \bftab 1.0${\color{black!50}\pm0.0}$                    &
    \bftab 1.0${\color{black!50}\pm0.0}$                    &
    \bftab 1.0${\color{black!50}\pm0.0}$                    &
    \bftab 1.0${\color{black!50}\pm0.0}$                    \\
  \bottomrule
  \end{tabular}
  }
\end{table*}

\begin{figure}[h]
  \centering
  \begin{subfigure}[c]{0.63\linewidth}
    \centering
    \includegraphics[width=\textwidth]{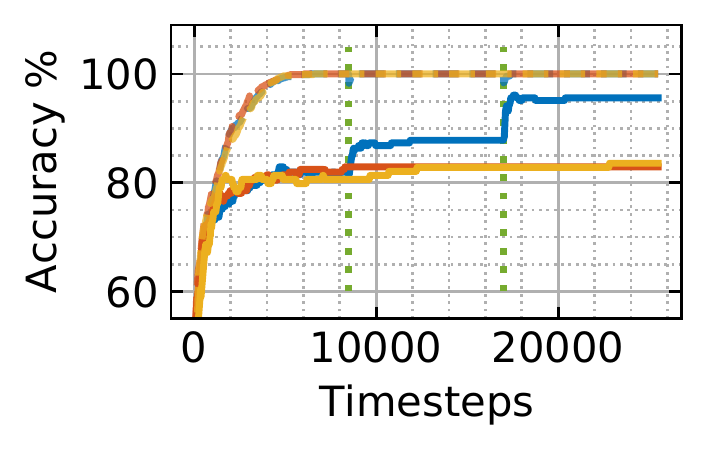}
  \end{subfigure}
  \begin{subfigure}[c]{0.35\linewidth}
    \centering
    \includegraphics[width=\textwidth]{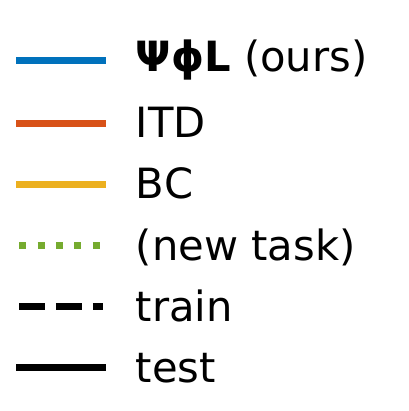}
  \end{subfigure}
  \caption{
    Test accuracy in predicting other agents' actions.
    The shared cumulants $\Phi$ for modelling others- and ego- reward functions allow our $\Psi \Phi$-learner to improve its ability to predict others' actions by experiencing new ego-tasks.
    Pure imitation learning and our ITD inverse RL methods achieve high train accuracy but they do not have a mechanism for utilising RL experience to improve their generalisation to the test set as new tasks are provided.
  }
  \label{fig:predict_others_actions}
\end{figure}

\subsection{Transfer and Few-Shot Generalisation}
\label{sec:results_transfer}

\label{sec:results_zeroshot}
Since SFs have been shown to improve generalization and transfer in RL, here we test hypothesis \textbf{H5:} $\Psi \Phi$-learning will be able to generalize effectively to new tasks in a few-shot transfer setting.
Using the CoinGrid environment, it is possible to precisely test whether the ego-agent can generalise to a task it has never experienced during training. Specifically, we would like to determine whether: (i) the ego-agent can generalise to tasks it was never rewarded for during training (but which it may have seen other agents demonstrate), and (ii) the ego-agent can generalise to tasks not experienced by \emph{any} agents during training.

Table \ref{tab:few-shot} shows the results of transfer experiments in which agents are given 0, 1, or 100 additional training episodes to adapt to a new task. Unlike SQIL, $\Psi \Phi$-learning is able to adapt 0-shot to obtain some reward on the new tasks, and fully adapt after a single episode to achieve the maximum reward on all transfer tasks. This is because $\Psi \Phi$-learning uses SFs to disentangle preferences (goals) in the task representation, and learn about the space of possible preferences from observing other agents.
To adapt to a new task, it need only infer the correct preference vector.
Task inference is trivially implemented as a least squares regression problem, see Eqn.~(\ref{eq:r-loss}): Having experienced $\B^{\text{new}}$ in the new task, the $\Psi \Phi$-learner identifies the preference vector for the new task by solving $\min_{\bw} \sum_{\B^{\text{new}}} \L_{\text{R}}(\btheta_{\Phi}, \bw)$.
In contrast, SQIL requires $100$ episodes to reach the same performance.

\section{Related Work}
\label{sec:related-work}

\textbf{Learning from demonstrations.}
~Learning from demonstrations, also referred to as imitation learning~\citep[IL,][]{widrow1964pattern,pomerleau1989alvinn,atkeson1997robot}, is an attractive framework for sequential decision making when reliable, expert demonstrations are available.
Early work on IL assumed access to high-quality demonstrations and aimed to match the expert policy~\citep{pomerleau1991efficient,heskes1998solving,ng2000algorithms,abbeel2004apprenticeship,billard2008survey,argall2009survey,ziebart2008maximum}.
Building on this assumption, many recent works have studied various aspects of both \emph{single-task}~\citep{ratliff2006maximum,wulfmeier2015maximum,choi2011inverse,finn2016guided,ho2016generative,finn2016connection,fu2017learning,zhang2018deep,rahmatizadeh2018vision} and \emph{multi-task}~\citep{,dimitrakakis2011bayesian,mulling2013learning,stulp2013learning,deisenroth2014multi,sharma2018multiple,codevilla2018end,fu2019language,rhinehart2020deep,filos2020can} IL.
However, these methods are all limited by the performance of the demonstrator that they try to imitate.
Learning from suboptimal demonstrations has been studied by~\citet{coates2008learning,grollman2011donut,zheng2014robust,choi2019robust,shiarlis2016inverse,brown2019extrapolating}, enabling, under certain assumptions, imitation learners to surpass their demonstrators' performance.
In contrast, our method integrates demonstrations into an online reinforcement learning pipeline and can use the demonstrations to improve learning on a new task.
Our inverse TD (ITD) learning offline multi-task inverse reinforcement learning algorithm is similar to the Cascaded Supervised IRL (CSI) approach \citep{klein2013cascaded}.
However, CSI assumes a single-task, deterministic expert while ITD does not.

\textbf{Reinforcement learning with demonstrations.}
~Demonstration trajectories have been used to accelerate the learning of RL agents~\citep{taylor2011integrating,vecerik2017leveraging,rajeswaran2017learning,hester2018deep,gao2018reinforcement,nair2018overcoming,paine2018one,paine2019making}, as well as demonstrations where actions and/or rewards are unknown~\citep{borsa2017observational,torabi2018behavioral,sermanet2018time,liu2018imitation,aytar2018playing,brown2019extrapolating}.
In contrast to the standard imitation learning setup, these methods allow improving over the expert performance as the policy can be further fine-tuned via reinforcement learning.
Offline reinforcement learning with online fine-tuning~\citep{kalashnikov2018scalable,levine2020offline} can be framed under this settings too.
Our method builds on the same principles, however, unlike these works, we do not assume that the demonstration data either come with reward annotations, or that they relate to the same task the RL agent is learning (i.e., we learn from multi-task demonstrations which may include irrelevant tasks).

\textbf{Successor features.}
~Successor features (SFs) are a generalisation of the successor representation~\citep{dayan1993improving} for continuous state and action spaces~\citep{barreto2017successor}. Prior work has used SFs for (i) zero-shot transfer~\citep{barreto2017successor,borsa2018universal,barreto2020fast}; (ii) exploration~\citep{janz2019successor,machado2020count}; (iii) skills discovery~\citep{machado2017eigenoption,hansen2019fast}; (iv) hierarchical RL~\citep{barreto2019option} and theory of mind~\citep{rabinowitz2018machine}.
Nonetheless, in all the aforementioned settings, direct access to the rewards or cumulants was provided.
Our method, instead, uses demonstrations without reward labels for inferring the cumulants and learning the corresponding SFs.
More closely to this work,~\citet{lee2019truly} propose learning cumulants and successor features for a \emph{single-task} IRL setting. 
Their approach differs from ours in two key respects:
first, they use a learned dynamics model to learn the cumulants. Second, the learned SFs are used for representing the action-value function, not to inform the behaviour policy with GPI.

\textbf{Model of others in multi-agent learning.}
~Our method draws inspiration and builds on the multi-agent learning setting, where multiple agents participate in the same environment and the states, actions of others are observed~\citep{davidson1999using,lockett2007evolving,he2016opponent,jaques2019social}.
However, we do not explore \emph{strategic} settings, where recursive reasoning~\citep{stahl1993evolution,yoshida2008game} is necessary for optimal behaviour.

\section{Discussion}
\label{sec:discussion}

We have presented two major algorithmic contributions.
The first, ITD, is a novel and flexible offline IRL algorithm that discovers salient task-agnostic environment features in the form of cumulants, as well as learning successor features and preference vectors for each agent which provides demonstrations.
The second, $\Psi \Phi$-learning, combines ITD with RL from online experience. This makes efficient use of unlabelled demonstrations to accelerate RL, and comes with theoretical worst-case performance guarantees.
We showed empirically the advantages of these algorithms over various baselines: how imitation with ITD can improve RL and enable zero-shot transfer to new tasks, and how experience from online RL can help to improve imitation in turn.

\textbf{Future Work.}
~We want to explore ways to: (i) adapt $\Psi \Phi$-learning to multi-agent \emph{strategic} settings, where coordination and opponent modelling~\citep{albrecht2018autonomous} are essential and (ii) use a universal successor features approximator~\citep{borsa2018universal} for ITD, overcoming its current, linear scaling with the number of distinct demonstrators. 

\textbf{Acknowledgements.}
~We thank Pablo Samuel Castro, Anna Harutyunyan, RAIL, OATML and IRIS lab members for their helpful feedback.
We also thank the anonymous reviewers for useful comments during the review process.
A.F. is funded by a J.P.Morgan PhD Fellowship and C.L. is funded by an Open Phil AI Fellowship.

\bibliography{references}
\bibliographystyle{icml2021}

\appendix
\onecolumn

\section{Experimental Details}
\label{app:experimental-details}

In this section we describe the environments used in our experiments (see Section~\ref{sec:experiments}) and the experiment design.
\vspace{-1em}
\subsection{Highway}
\label{app:subsubsec:highway}

\begin{minipage}{0.82\textwidth}
  We build on the \texttt{highway-v0} task from the \texttt{highway-env} traffic simulator~\citep{highway-env}.
  The task is specified by:
  \begin{enumerate}[noitemsep]
    \item \textbf{State space, $\S$:}
      The kinematic information of the ego vehicle and the five closest vehicles (ordered from closest to the furthest) is used as the Markov state, i.e., $\bs_t = \{\left[ x_t, y_t, \dot{x}_{t}, \dot{y}_{t} \right]\}_{\text{ego},\ \text{other}_{1},\ \ldots,\ \text{other}_{5}} \in \mathbb{R}^{6 \times 4}$.
      The {\color{matlab-green}ego-car} is illustrated in green and the {\color{matlab-blue}other} cars in blue.
    \item \textbf{Action space, $\A$:}
      We use a discrete action space, constructed by $K$-means clustering of the continuous actions of the intelligent driving model~\citep{kesting2010enhanced}.
      We found out that keeping 9 actions was sufficient, i.e., $\ba_t \in \{0, \ldots, 8\}$.
    \item \textbf{Demonstrations, $\D$:}
      At each time-step, the ego-car observes online the state-action pairs for the 5 closest cars.
  \end{enumerate}
\end{minipage}
\hspace{0.25em}
\begin{minipage}{0.15\textwidth}
  \centering
  \includegraphics[width=\linewidth]{assets/misc/highway/highway-v0.png}
  \captionof{figure}{Highway}
\end{minipage}

\subsection{Roundabout}
\label{app:subsubsec:roundabout}

\begin{minipage}{0.82\textwidth}
  We build on the \texttt{roundabot-v0} task from the \texttt{highway-env} traffic simulator~\citep{highway-env}.
  The task is specified by:
  \begin{enumerate}[noitemsep]
    \item \textbf{State space, $\S$:}
      The kinematic information of the ego vehicle and the five closest vehicles (ordered from closest to the furthest) is used as the Markov state, i.e., $\bs_t = \{\left[ x_t, y_t, \dot{x}_{t}, \dot{y}_{t} \right]\}_{\text{ego},\ \text{other}_{1},\ \ldots,\ \text{other}_{3}} \in \mathbb{R}^{4 \times 4}$.
      The {\color{matlab-green}ego-car} is illustrated in green and the {\color{matlab-blue}other} cars in blue.
    \item \textbf{Action space, $\A$:}
      We use a discrete action space, constructed by $K$-means clustering of the continuous actions of the intelligent driving model~\citep{kesting2010enhanced}.
      We found out that keeping 6 actions was sufficient, i.e., $\ba_t \in \{0, \ldots, 5\}$.
    \item \textbf{Demonstrations, $\D$:}
      At each time-step, the ego-car observes online the state-action pairs for the 3 closest cars.
  \end{enumerate}
\end{minipage}
\hspace{0.25em}
\begin{minipage}{0.15\textwidth}
  \centering
  \includegraphics[width=\linewidth]{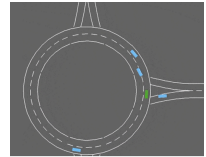}
  \captionof{figure}{Roundabout}
\end{minipage}

\subsection{CoinGrid}
\label{app:subsubsec:coingrid}

\begin{minipage}{0.82\textwidth}
  We build a simple multi-task grid-world.
  The task is specified by:
  \begin{enumerate}[noitemsep]
    \item \textbf{State space, $\S$:}
      We use a symbolic, multi-channel representation of the $7 \times 7$ gridworld~\citep{babyai_iclr19}:
      the first three channels specify the presence or absence of the three different coloured boxes, the forth channel was the walls mask and the fifth and last channel was the position and orientation of the agent.
      We represent the orientation of the agent by `painting' the cell in front of the agent.
      Therefore $\bs_t \in \{0, 1\}^{7 \times 7 \times 5}$.
    \item \textbf{Action space, $\A$:}
      We use the \{\texttt{LEFT, RIGHT, FORWARD}\} actions from Minigrid~\citep{gym_minigrid} to navigate the maze, i.e., $\ba_t \in \{0, 1, 2\}$.
    \item \textbf{Demonstrations, $\D$:}
      At the beginning of training, the agent is given state-action pairs of other agents collecting either red or green coins.
  \end{enumerate}
\end{minipage}
\hspace{0.25em}
\begin{minipage}{0.15\textwidth}
  \centering
  \includegraphics[width=\linewidth]{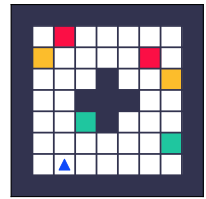}
  \captionof{figure}{CoinGrid}
\end{minipage}

\subsection{Fruitbot}
\label{app:subsubsec:fruitbot}

\begin{minipage}{0.82\textwidth}
  We build on the \texttt{Fruitbot} environment from OpenAI's ProcGen benchmark~\citep{cobbe2020leveraging}.
  The task is specified by:
  \begin{enumerate}[noitemsep]
    \item \textbf{State space, $\S$:}
      We use the original high-dimensional $64 \times 64$ RGB observations, i.e., $\bs_t \in \left[0, 1\right]^{64 \times 64 \times 3}$.
    \item \textbf{Action space, $\A$:}
      We use the original 15 discrete actions, i.e., $\ba_t \in \{0, \ldots, 14\}$.
    \item \textbf{Demonstrations, $\D$:}
      At each time-step, the agent observes online the states and actions of 3 trained agents playing the game in parallel:
      One agent collects both fruits and other objects, one collects other objects and avoids fruits and the last one randomly selects actions.
  \end{enumerate}
\end{minipage}
\hspace{0.25em}
\begin{minipage}{0.15\textwidth}
  \centering
  \includegraphics[width=\linewidth]{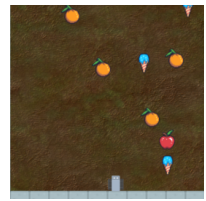}
  \captionof{figure}{Fruitbot}
\end{minipage}

\clearpage

\section{Implementation Details}
\label{app:implementation-details}

For our experiments we used Python~\citep{van1995python}.
We used JAX~\citep{jax2018github,deepmind2020jax} as the core computational library, Haiku~\citep{haiku2020github} and Acme~\citep{hoffman2020acme} for implementing $\Psi \Phi$-learning and the baselines, see Section~\ref{sec:experiments}.
We also used Matplotlib~\citep{hunter2007matplotlib} for the visualisations and Weightd \& Biases~\citep{wandb} for managing the experiments.

\subsection{Computation Graph}
\label{app:subsec:computational-graph}

\begin{figure}[h]
  \centering
\resizebox{0.8\linewidth}{!}{\begin{tikzpicture}


  {\color{matlab-red}
  \node (Phi) at (0,0) {$\Phi$};
  }
  

  {\color{matlab-blue}
  \node[left = 3em of Phi] (Psi_k) {$\Psi^{k}$};
  \path (Phi) edge [>=latex, double, <->] node[sloped, anchor=center, below] {\tiny \ \ \ \ \ \ $\mathcal{L}_{\text{ITD}}$} (Psi_k);
  \node[left = 1em of Psi_k] (w_k) {$\bw^{k}$};
  \node[above = 1em of Psi_k] (Q_k) {$Q^{k}$};
  \path (Psi_k) edge [>=latex, ->] (Q_k);
  \path (w_k) edge [>=latex, ->, bend left=30] (Q_k);
  \node[draw, circle, fill=matlab-blue!5, above = 1.5em of Q_k] (D) {$\D$};
  \path (D) edge [>=latex, double, ->] node[sloped, right, rotate=90] {\tiny $\mathcal{L}_{\text{BC-}Q}$} (Q_k);
  \draw[thick, dotted] ($(w_k.south west)+(-0.3,0.0)$) rectangle ($(Q_k.north east)+(+0.3,+0.0)$);
  \node[above left = 0.10em and 0.00em of Q_k] (k) {\tiny $k = 1 \ldots K$};
  \node[above = 1em of D] (env_) {\includegraphics[width=1.5em]{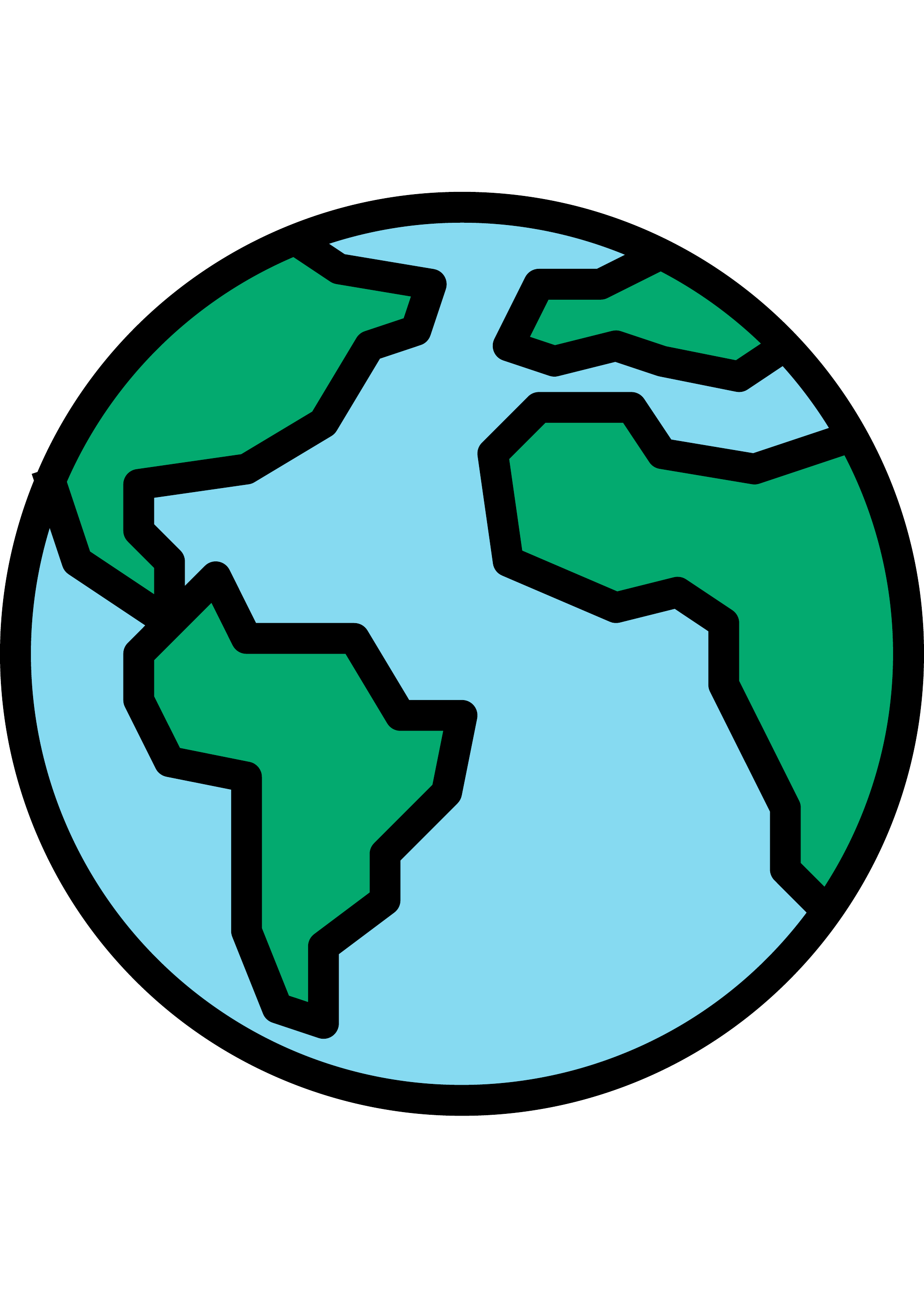}};
  \node[left = 1em of env_] (other_1) {\includegraphics[width=1.5em]{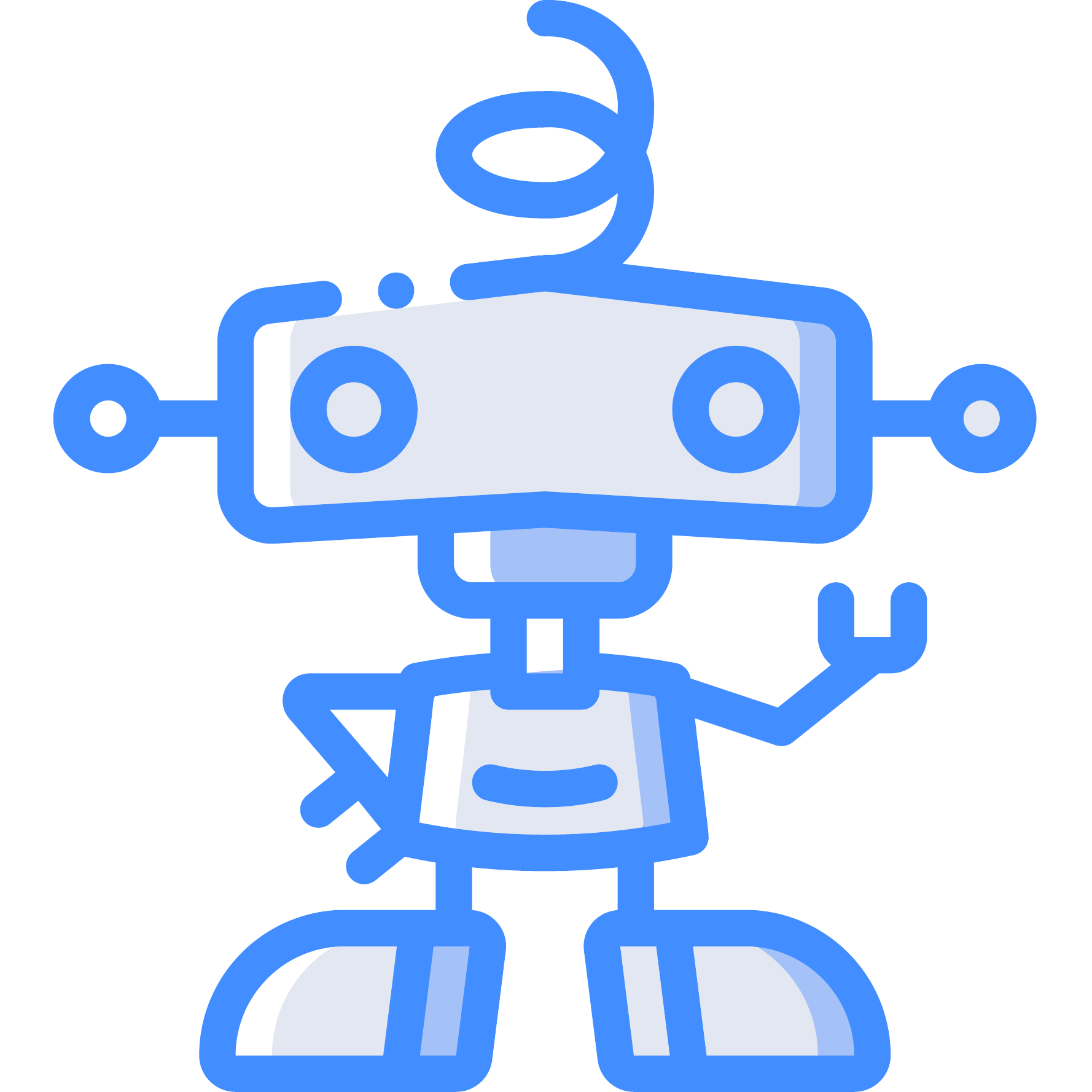}};
  \path (other_1) edge [>=latex, ->] (env_);
  \node[right = 1em of env_] (other_2) {\includegraphics[width=1.5em]{assets/misc/icons/blue-robot.pdf}};
  \path (other_2) edge [>=latex, ->] (env_);
  \path (env_) edge [>=latex, ->] (D);
  }
  
  
  {\color{matlab-green}
  \node[above = 2em of Phi] (Psi_ego) {$\Psi^{\text{ego}}$};
  \path (Psi_ego) edge [>=latex, double, <-] node[sloped, right, rotate=90] {\tiny $\mathcal{L}_{\text{TD-}\Psi}$} (Phi);
  \node[right = 3em of Phi] (w_ego) {$\bw^{\text{ego}}$};
  \node[above = 2em of w_ego] (Q_ego) {$Q^{\text{ego}}$};
  \path (Psi_ego) edge [>=latex, ->] (Q_ego);
  \path (w_ego) edge [>=latex, ->] (Q_ego);
  
  \node[draw, above = 1.5em of Q_ego] (gpi) {\tiny GPI};
  \path (Q_ego) edge [>=latex, ->] (gpi);
  \path (Psi_k) edge [>=latex, dashed, ->, bend left=25] ([yshift=+0.25em]gpi.west);
  \path (w_ego) edge [>=latex, dashed, ->, bend left=60] (gpi);
  
  \node[above right = 0.5em and 1em of gpi] (pi_ego) {$\pi^{\text{ego}}$};
  \node[above = 0.1em of pi_ego] (ego_robot) {\includegraphics[width=1.25em]{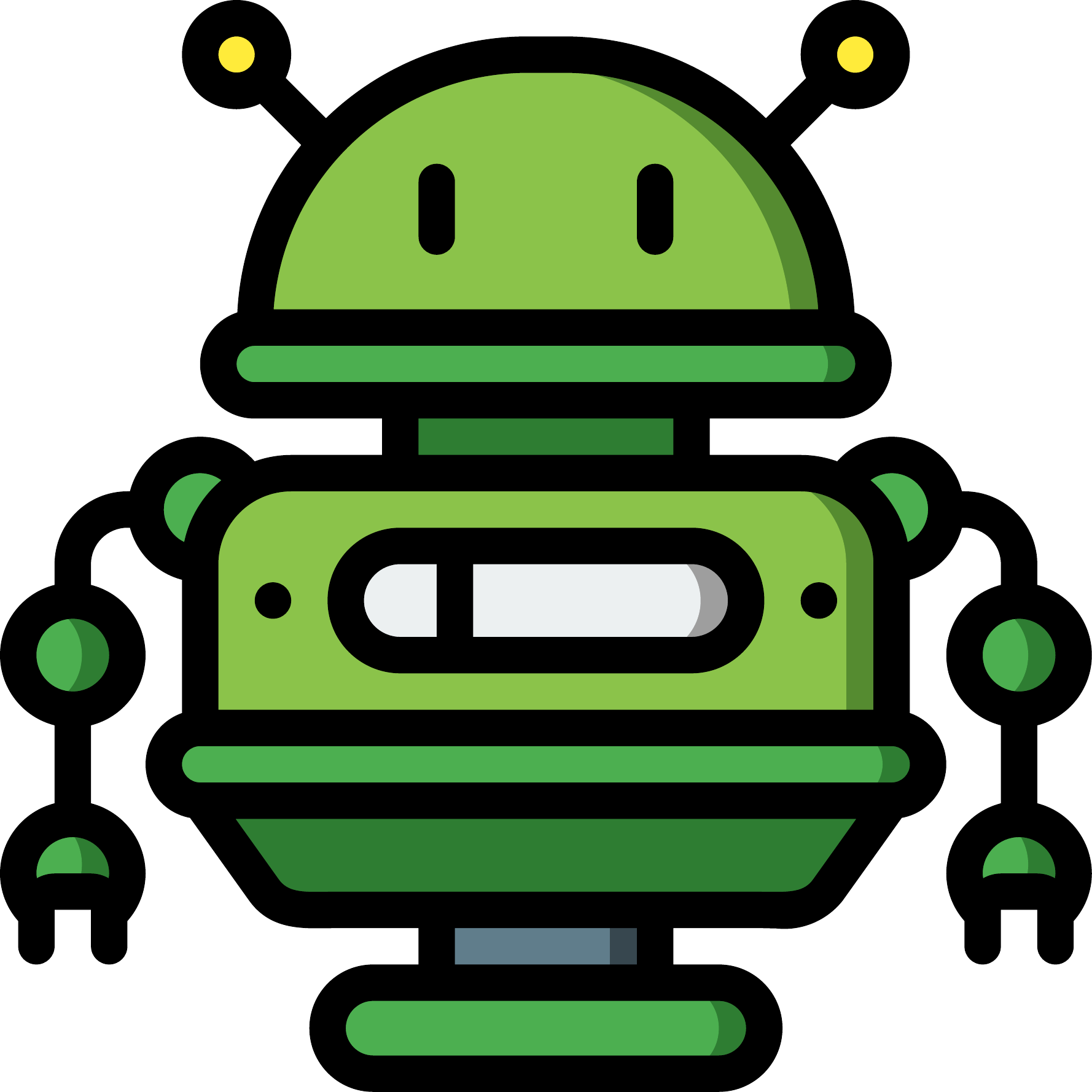}};
  \node[right = 1em of pi_ego] (env) {\includegraphics[width=1.5em]{assets/misc/icons/earth.pdf}};
  \path (gpi) edge [>=latex, ->, bend left=25] (pi_ego);
  \node[draw, circle, fill=matlab-green!5] (B) at (Q_ego -| env) {$\B$};
  \path (pi_ego) edge [>=latex, ->] (env);
  \path (env) edge [>=latex, ->] (B);
  \path (B) edge [>=latex, double, <->] node[sloped, anchor=center, above = 0.25em] {\tiny $\mathcal{L}_{\text{TD-}Q}$} (Q_ego);
  
  \node[below = 1em of w_ego] (r_ego) {$r^{\text{ego}}$};
  \path (Phi) edge [>=latex, ->, bend right=25] (r_ego);
  \path (w_ego) edge [>=latex, <->] (r_ego);
  \path (B) edge [>=latex, double, ->, bend left=45] node[sloped, anchor=center, below] {\tiny $\mathcal{L}_{R}$} (r_ego.east);
  }
    
\end{tikzpicture}}
  \caption{
    \textbf{Computational graph of the $\Psi \Phi$-learning algorithm.}
    Demonstrations $\D$ contain data from {\color{matlab-blue}other agents} for \emph{unknown} tasks. We employ \emph{inverse temporal difference learning} (ITD, see Section~\ref{subsec:inverse-temporal-difference-learning}) to recover other agents' successor features (SFs) and preferences.
    The {\color{matlab-green}ego-agent} combines the estimated SFs of others along with its own preferences and successor features with generalised policy improvement (GPI, see Section~\ref{subsec:successor-features-and-cumulants}), generating experience.
    Both the demonstrations and the ego-experience are used to learn the {\color{matlab-red}shared cumulants}.
    Losses $\mathcal{L}_{*}$ are represented with double arrows and gradients flow according to the pointed direction(s).
}
\label{fig:psi-phi-learning-schematic}
\end{figure}

\subsection{Neural Network Architecture}
\label{app:subsec:neural-network-architecture}

\begin{figure}[h]
  \centering
  \resizebox{0.75\linewidth}{!}{\begin{tikzpicture}
    
 
  \node (s_t) at (0,0) {$\bs_{t}$};

  \node [circle, inner sep=0pt, outer sep=0pt, minimum size=0pt] (dot_phi) at (0.0, 1.75) {};
  \node [circle, inner sep=0pt, outer sep=0pt, minimum size=0pt] (dot_ego) at (2.5, 1.75) {};
  \node [circle, inner sep=0pt, outer sep=0pt, minimum size=0pt] (dot_others_K) at (-2.5, 1.75) {};
  \node [circle, inner sep=0pt, outer sep=0pt, minimum size=0pt] (dot_others_k) at (-4.5, 1.75) {};
  \node [circle, inner sep=0pt, outer sep=0pt, minimum size=0pt] (dot_others_1) at (-6.5, 1.75) {};

  {\color{matlab-red}
  \draw [draw=matlab-red, fill=matlab-red!20] plot coordinates {(-0.5, 0.75) (+0.5, 0.75) (+0.25, 1.25) (-0.25, 1.25)} -- cycle;
  \node [circle, minimum size=2pt] (E) at (0.0, 1.0) {$\mathcal{E}$};
  \node[draw, minimum height=0.5cm, fill=matlab-red!10] (Phi) at (0.0, 2.5) {$\Phi(\bs_t, \cdot)$};
  
  \path (dot_phi.south) edge [>=latex, ->] (Phi);
  }

  \path (s_t) edge [>=latex, ->] (E);
  \path (E) edge [>=latex, -] (dot_phi.south);
  

  {\color{matlab-blue}
  \node[draw, minimum height=0.5cm, fill=matlab-blue!10] (Psi_1_) at (-6.6, 2.6) {\tiny $\Psi^{1}(\bs_t, \cdot)$};
  \node[draw, minimum height=0.5cm, fill=matlab-blue!10] (Psi_1) at (-6.5, 2.5) {\tiny $\Psi^{1}(\bs_t, \cdot)$};
  \node[draw, circle, right = 0.1em of Psi_1, minimum height=0.5cm, fill=matlab-blue!10] (w_1) {\tiny $\bw^{1}$};
  \path (dot_others_1.south) edge [>=latex, ->] (Psi_1);
  \node[minimum height=0.5cm] (Psi_k) at (-4.5, 2.5) {\tiny $\cdots$};
  \node[right = 0.1em of Psi_k, minimum height=0.5cm] (w_k) {};
  \path (dot_others_k.south) edge [>=latex, ->] (Psi_k);
  \node[draw, minimum height=0.5cm, fill=matlab-blue!10] (Psi_2_) at (-2.6, 2.6) {\tiny $\Psi^{K}(\bs_t, \cdot)$};
  \node[draw, minimum height=0.5cm, fill=matlab-blue!10] (Psi_K) at (-2.5, 2.5) {\tiny $\Psi^{K}(\bs_t, \cdot)$};
  \node[draw, circle, right = 0.1em of Psi_K, minimum height=0.5cm, fill=matlab-blue!10] (w_K) {\tiny $\bw^{K}$};
  \path (dot_others_K.south) edge [>=latex, ->] (Psi_K);
  
  \path (dot_phi.south) edge [>=latex, -] (dot_others_K.south);
  \path (dot_others_K.south) edge [>=latex, -] (dot_others_k.south);
  \path (dot_others_k.south) edge [>=latex, -] (dot_others_1.south);
  }
  
  
  {\color{matlab-green}
  \node[draw, minimum height=0.5cm, fill=matlab-green!10] (Psi_ego_) at (2.4, 2.6) {\tiny $\Psi^{\text{ego}}(\bs_t, \cdot)$};
  \node[draw, minimum height=0.5cm, fill=matlab-green!10] (Psi_ego) at (2.5, 2.5) {\tiny $\Psi^{\text{ego}}(\bs_t, \cdot)$};
  \node[draw, circle, right = 0.1em of Psi_ego, minimum height=0.5cm, fill=matlab-green!10] (w_ego) {\tiny $\bw^{\text{ego}}$};
  \path (dot_ego.south) edge [>=latex, ->] (Psi_ego);
  \path (dot_phi.south) edge [>=latex, -] (dot_ego.south);
  }
    
\end{tikzpicture}}
  \caption{
    \textbf{Neural network architecture of the $\Psi \Phi$-learner.}
    The rectangular nodes are tensors parametrised by MLPs and the circles are learnable vectors.
    We share an observation network/torso, $\mathcal{E}$, across all the network heads.
    The network heads that related to the {\color{matlab-blue}other agents} are in blue and trained from demonstrations $\D$.
    The {\color{matlab-green}ego-agent's} experience $\B$ is used for training the green heads.
    The {\color{matlab-red}shared cumulants and torso} are trained with both $\D$ and $\B$.
    An ensemble of two successor features approximators is used for the ego- and other- agents for combatting model overestimation, see Section~\ref{sec:method}.
}
\label{fig:neural-network-architecture}
\end{figure}
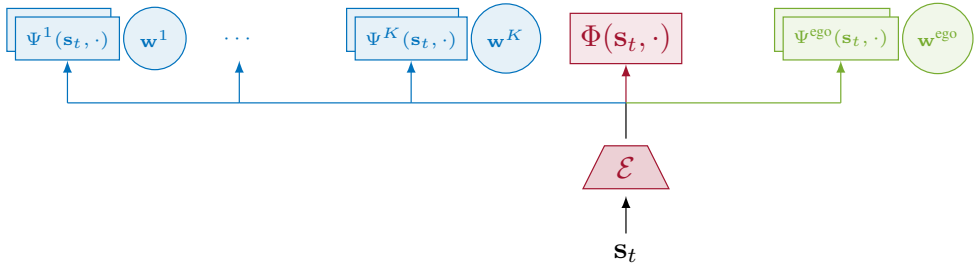

\subsection{Hyperparameters}
\label{app:subsec:hyperparameters}

\begin{table}[h]
  \centering
  \caption{
    \textbf{$\Psi \Phi$-learner's hyperparameters per environment}.
    The tuning was performed on a DQN~\citep{mnih2013playing} baseline with population based training~\citep{jaderberg2017population} using Weights \& Biases~\citep{wandb} integration with Ray Tune~\citep{liaw2018tune}.
    We selected the best hyperparameters configuration out of 32 trials per environment and used this for our $\Psi \Phi$-learner.
  }
  \label{tab:hyperparameters}
  \resizebox{\linewidth}{!}{
  \begin{tabular}{lc|c|c}
  \toprule
  & \texttt{Highway} & \texttt{CoinGrid} & \texttt{FruitBot} \\
  \midrule
  \textbf{Torso network, $\mathcal{E}$}  &
    MLP([512, 256]) &
    IMPALA~\citep{espeholt2018impala}, shallow (no LSTM)    &
    IMPALA~\citep{espeholt2018impala}, deep (no LSTM)       \\
  \textbf{Cumulants approximator, $\Phi$}   &
    MLP([128, 128])  &
    MLP([256, 128]) &
    MLP([256, 128]) \\
  \textbf{Successor features approximator, $\Psi$}   &
    MLP([256, 128])  &
    MLP([512, 256]) &
    MLP([512, 256]) \\
  \textbf{Ensemble size, $\Psi$}  &
    2   &
    2   &
    2   \\
  \textbf{$\L_{1}$ coefficient}  &
    0.05    &
    0.05    &
    0.05    \\
  \midrule
  \textbf{Number of dimensions in $\Phi$}   &
    8   &
    4   &
    64  \\
  \midrule
  \textbf{Minibatch size}    &
    512 &
    64  &
    32  \\
  $n$\textbf{-step} &
    4   &
    8   &
    128 \\
  \textbf{Discount factor, $\gamma$} &
    1.0     &
    0.9     &
    0.999   \\
  \textbf{Target network update period} &
    100     &
    1000    &
    2500    \\
  \textbf{Optimiser} &
    ADAM~\citep{kingma2014adam}, $\texttt{lr=1e-3}$ &
    ADAM~\citep{kingma2014adam}, $\texttt{lr=1e-4}$ &
    ADAM~\citep{kingma2014adam}, $\texttt{lr=5e-5}$ \\
  \bottomrule
  \end{tabular}
  }
\end{table}

\subsection{Compute Resources}
\label{app:subsec:compute-resources}

All the experiments were run on Microsoft Azure \texttt{Standard\_NC6s\_v3} machines, i.e., with a 6-core vCPU, 112GB RAM and a single NVIDIA Tesla V100 GPU.
The iteration cycle for (i) \textbf{Highway} experiments was 3 hours; (ii) \textbf{CoinGrid} experiments was 5.5 hours and (iii) \textbf{Fruitbot} experiments was 19 hours.

\clearpage

\section{Proofs}
\label{app:proofs}
\setcounter{theorem}{0}
First, we formalise the statement of Theorem~\ref{thm:itd}.

\begin{theorem}[Validity of the ITD Minimiser]
The minimisers of $\L_{\text{BC-}Q}$ and $\L_{\text{ITD}}$ are potentially-shaped cumulants that explain the observed reward-free demonstrations.
\end{theorem}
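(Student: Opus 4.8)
The plan is to characterise the zero-loss minimisers directly: under a realisability assumption on the function classes (so that the infimum of each loss is attained at $0$), I would show that a minimiser simultaneously satisfies a policy-consistency constraint coming from $\L_{\text{BC-}Q}$ and a successor-feature Bellman constraint coming from $\L_{\text{ITD}}$, and that together these say precisely that each demonstrator is soft-optimal under the recovered reward. First, a pair $(\Psi^{k},\bw^{k})$ attains zero $\L_{\text{BC-}Q}$ on $\D^{k}$ iff $\mathrm{softmax}_{\ba}\big(\Psi^{k}(\bs,\ba)^{\top}\bw^{k}\big)=\pi^{k}(\ba\mid\bs)$ on the support of $\D^{k}$, i.e. $\Psi^{k}(\bs,\ba)^{\top}\bw^{k}=\log\pi^{k}(\ba\mid\bs)+c^{k}(\bs)$ for some state-only offset $c^{k}$. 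Second, since the bootstrap target in \eqref{eq:itd-loss} carries a stop-gradient, a zero-$\L_{\text{ITD}}$ configuration satisfies the on-policy successor-feature Bellman equation $\Psi^{k}(\bs,\ba)=\Phi(\bs,\ba)+\gamma\,\E^{\C,\pi^{k}}[\Psi^{k}(\bs',\ba')\mid\bs,\ba]$ on the support of $\D$; by the usual $\gamma$-contraction this pins $\Psi^{k}$ down as the genuine successor features $\Psi^{\pi^{k}}$ of $\pi^{k}$ for the shared cumulant $\Phi$, whence $\Psi^{k}(\bs,\ba)^{\top}\bw^{k}=Q^{\pi^{k},\bw^{k}}(\bs,\ba)$ by Eqn.~(\ref{eq:Q-Psi-times-w}) with $R^{\bw^{k}}=\Phi^{\top}\bw^{k}$.

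Combining the two constraints at a joint minimiser, $\pi^{k}$ is exactly the Boltzmann policy over its own action values under the reward $R^{k}\define\Phi^{\top}\bw^{k}$, i.e. Eqn.~(\ref{eq:pi-k}) holds — and it holds for every $k$ through the \emph{single} shared $\Phi$. This is the precise sense in which the recovered cumulants ``explain'' the reward-free demonstrations, and it generalises the cascaded-supervised-IRL argument of \citet{klein2013cascaded} from a scalar reward and a single policy to vector cumulants and many policies.

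For the ``potentially-shaped'' clause I would exhibit the gauge freedom of the pair of losses explicitly. For any vector potential $p\colon\S\to\mathbb{R}^{d}$, the joint substitution $\Phi(\bs,\ba)\mapsto\Phi(\bs,\ba)-p(\bs)+\gamma\,\E_{\bs'\sim P(\cdot\mid\bs,\ba)}[p(\bs')]$ and $\Psi^{k}(\bs,\ba)\mapsto\Psi^{k}(\bs,\ba)-p(\bs)$ leaves the Bellman residual in \eqref{eq:itd-loss} unchanged, while it shifts $\Psi^{k}(\bs,\ba)^{\top}\bw^{k}$ by the state-only term $-p(\bs)^{\top}\bw^{k}$, which cancels inside every softmax and so leaves \eqref{eq:bcq-loss} unchanged. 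Hence the minimiser set is closed under potential shaping, so a ground-truth cumulant can be recovered only up to such a transformation. For the converse — that two minimisers fitting the same $\{\pi^{k}\}$ differ \emph{only} by shaping — I would use that $\L_{\text{BC-}Q}$ pins $\Psi^{k}{}^{\top}\bw^{k}$ down to the per-agent offset $c^{k}(\bs)$, and then the Bellman constraint forces the difference of the two candidate $\Phi$'s to have exactly the shaping form $-p(\bs)+\gamma\,\E[p(\bs')]$.

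The step I expect to be the main obstacle is this converse, in the multi-task case: the offsets $c^{k}$ are a priori per-agent, and one must show they assemble into a single vector potential $p$ with $p(\bs)^{\top}\bw^{k}=c^{k}(\bs)$ for all $k$, which needs a mild richness condition on $\{\bw^{k}\}$ (e.g. spanning $\mathbb{R}^{d}$) together with enough coverage of $\D$. A secondary point to handle with care is the realisability/coverage hypothesis and the behaviour of the $\|\cdot\|$ penalties under the empirical distribution, so that ``zero loss'' really means pointwise equality on the visited state--action pairs; I would state these as explicit assumptions rather than belabour them.
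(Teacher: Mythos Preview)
Your proposal is correct and follows the same core logic as the paper's proof: both use the softmax translation-invariance of $\L_{\text{BC-}Q}$ to identify $\Psi^{k}{}^{\top}\bw^{k}$ with the demonstrator's action-values up to a state-only offset, and then read off the recovered reward from the Bellman residual in $\L_{\text{ITD}}$. The paper's presentation differs mainly in organisation: it starts from an explicit Boltzmann-expert assumption with temperature $\nu$, does the single-task scalar case first, and then simply takes the inner product of the vector ITD fixed-point equation with $\bw^{k}$ to obtain $\Phi^{\top}\bw^{k*}=\tfrac{1}{\nu}r^{k\text{-expert}}+F^{k}(\bs)-\gamma F^{k}(\bs')$ directly, appealing to \citet{ng1999policy} for optimal-policy invariance. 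Your treatment is somewhat more structural --- you characterise the zero-loss set, exhibit the potential-shaping gauge symmetry at the \emph{vector} cumulant level, and attempt a converse (uniqueness up to that gauge). The paper does not attempt that converse at all, and your identification of the spanning condition on $\{\bw^{k}\}$ as the obstacle is accurate and goes beyond what the paper proves. One small point of alignment: the paper absorbs the expert temperature as an explicit $1/\nu$ scaling of the recovered reward, which your proposal leaves implicit in the magnitude of $\bw^{k}$; this is consistent with the remark in Section~\ref{subsec:inverse-temporal-difference-learning} that $\|\bw^{k}\|$ captures greediness, but worth stating.
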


\begin{proof}
  First, we prove the validity of the minimiser of the inverse temporal difference learning for the single-task setting.
  Next, we show that the result holds true in the vector (i.e., cumulants) case.
  
  \textbf{Single task.}
  We assume that our demonstrations are generated by an expert, who samples actions from a Boltzmann policy, according to the optimal (for its task) action-value function $Q^{\pi_{\text{expert}}}$ and temperature $\nu>0$, i.e., $\D = \{(\bs, \ba)\} \sim \pi_{\text{expert}}$ s.t.
  \begin{align}
    \pi_{\text{expert}}(\ba | \bs) \triangleq p(A=\ba | \bs) = \frac{ \exp(\frac{1}{\nu} Q^{\pi_{\text{expert}}}(\bs, \ba))}{\sum_{a} \exp(\frac{1}{\nu} Q^{\pi_{\text{expert}}}(\bs, a))},\ \forall \bs \in \S, \ba \in \A \,.
  \end{align}
  The minimiser of the behavioural cloning loss $\L_{\text{BC-}Q}(\btheta_{Q})$, i.e. Eqn.~(\ref{eq:bcq-loss}), for a single expert is s.t.
  \begin{align}
    \btheta_{Q}^{*}
      &\in \argmin_{\btheta_{Q}}\ - \! \! \! \! \! \E_{(\bs, \ba) \sim \D} \log \frac{\exp(Q(\bs, \ba; \btheta_{Q}))}{\sum_{a} \exp(Q(\bs, a; \btheta_{Q}))} \\
      &\Rightarrow Q(\bs, \ba; \btheta_{Q}^{*}) = \frac{1}{\nu} Q^{\pi_{\text{expert}}}(\bs, \ba) + F(\bs),\ \forall \bs \in \S, \ba \in \A \,,
    \label{eq:btheta-q-star}
  \end{align}
  where $F: \mathcal{S} \rightarrow \mathbb{R}$ is a state-dependent (bounded potential) function.
  We arrive at Eqn.~(\ref{eq:btheta-q-star}) by (1) testing $\frac{1}{\nu} Q^{\pi_{\text{expert}}}(\bs, \ba)$ as a solution and noting that the ``softmax`` function is convex in the exponent~\citep{boyd2004convex} and (2) using the translation invariance property of the assumed Boltzmann policy parametrisation, i.e., for any $f: \mathcal{S} \times \mathcal{A} \rightarrow \mathbb{R}$ and $g: \mathcal{S} \rightarrow \mathbb{R}$
  \begin{align}
    \frac{\exp(f(\bs, \ba) + g(\bs))}{\sum_{a} \exp(f(\bs, a)  + g(\bs))} = \frac{\exp(g(\bs)) \exp(f(\bs, \ba))}{\exp(g(\bs)) \sum_{a} \exp(f(\bs, a))} = \frac{\exp(f(\bs, \ba))}{\sum_{a} \exp(f(\bs, a))} \,.
  \end{align}
  The minimiser of the inverse temporal difference learning loss $\L_{\text{ITD}}(\btheta_{Q}, \btheta_{r})$, Eqn.~(\ref{eq:itd-loss}), for a single expert is s.t.
  \begin{align}
    \btheta_{Q}^{*}, \btheta_{r}^{*} \in \argmin_{\btheta_{Q}, \btheta_{r}} \E_{(\bs, \ba, \bs', \ba') \sim \D} \| Q(\bs, \ba; \btheta_{Q}) - r(\bs, \ba; \btheta_{r}) - \gamma Q(\bs', \ba'; \btheta_{Q})  \|
    \label{eq:btheta-r-star}
  \end{align}
  where $\btheta_{Q}^{*}$ is minimising $\L_{\text{BC-}Q}(\btheta_{Q})$ simultaneously, as in Eqn.~(\ref{eq:btheta-q-star}).
  Therefore, it holds that $\L_{\text{ITD}}(\btheta_{Q}^{*}, \btheta_{r}^{*}) = 0$
  \begin{align}
    r(\bs, \ba; \btheta_{r}^{*})
      &= Q(\bs, \ba; \btheta_{Q}^{*}) - \gamma Q(\bs', \ba'; \btheta_{Q}^{*}) \\
      &\!\!\overset{(\ref{eq:btheta-q-star})}{=} \frac{1}{\nu} Q^{\pi_{\text{expert}}}(\bs, \ba) + F(\bs) - \gamma \frac{1}{\nu} Q^{\pi_{\text{expert}}}(\bs', \ba') - \gamma F(\bs') \\
      &= \, \, \frac{1}{\nu} \left[\underdescribe{Q^{\pi_{\text{expert}}}(\bs, \ba) - \gamma Q^{\pi_{\text{expert}}}(\bs', \ba')}{r^{\text{expert}}(\bs, \ba)}\right] + F(\bs) - \gamma F(\bs') \\
      &= \frac{1}{\nu} r^{\text{expert}}(\bs, \ba) + \underdescribe{F(\bs) - \gamma F(\bs')}{\substack{\text{potential-based reward} \\ \text{shaping function}}} \,,
      \label{eq:potential-shaping}
  \end{align}
  where $r^{\text{expert}}$ is the (unobserved) expert's reward function.
  We have shown that the minimiser of $\L_{\text{BC-}Q}$ and $\L_{\text{ITD}}$ leads to a reward function $r(\bs, \ba; \btheta_{r}^{*})$ which is a potential-based shaped and scaled reward function of the expert reward function and hence the optimal policy for $r(\bs, \ba; \btheta_{r}^{*})$ is also optimal for $r^{\text{expert}}(\bs, \ba)$ for all $\bs, \ba$~\citep{ng1999policy}.
  
  \textbf{Multiple tasks.}
  The minimiser of the behavioural cloning loss $\L_{\text{BC-}Q}(\btheta_{\Psi^{k}}, \bw^{k})$, i.e., Eqn.~(\ref{eq:bcq-loss}), for the $k$-th expert is s.t.
  \begin{align}
    \btheta_{\Psi^{k}}^{*}, \bw^{k*}
      &\in \argmin_{\btheta_{\Psi^{k}}, \bw^{k}}\ - \! \! \! \! \! \E_{(\bs, \ba) \sim \D^{k}} \log \frac{\exp(\Psi(\bs, \ba; \btheta_{\Psi^{k}})^{\top} \bw^{k})}{\sum_{a} \exp(\Psi(\bs, a; \btheta_{\Psi^{k}})^{\top} \bw^{k})} \\
      &\Rightarrow \Psi(\bs, \ba; \btheta_{\Psi^{k}})^{\top} \bw^{k} = \frac{1}{\nu} Q^{\pi_{k\text{-expert}}}(\bs, \ba) + F^{k}(\bs),\ \forall \bs \in \S, \ba \in \A \,,
    \label{eq:btheta-q-star-multi}
  \end{align}
  where $Q^{\pi_{k\text{-expert}}}$ is the $k$-agent's action-value function and $H^{k}: \mathcal{S} \rightarrow \mathbb{R}$ a state-dependent (bounded potential) function.
  Next, the minimiser of the inverse temporal difference learning loss $\L_{\text{ITD}}(\btheta_{\Psi^{k}}, \btheta_{\Phi})$, Eqn.~(\ref{eq:itd-loss}), for the $k$-th expert is s.t.
  \begin{align}
    \btheta_{\Psi^{k}}^{*}, \btheta_{\Phi}^{*} \in \argmin_{\btheta_{\Psi^{k}}, \btheta_{\Phi}} \E_{(\bs, \ba, \bs', \ba') \sim \D^{k}} \| \Psi(\bs, \ba; \btheta_{\Psi^{k}}) - \Phi(\bs, \ba; \btheta_{\Phi}) - \gamma \Psi(\bs', \ba'; \btheta_{\Psi^{k}})  \|
    \label{eq:btheta-r-star-multi}
  \end{align}
  where $\btheta_{\Psi^{k}}^{*}$ is minimising $\L_{\text{BC-}Q}(\btheta_{\Psi^{k}}, \bw^{k})$ simultaneously, as in Eqn.~(\ref{eq:btheta-q-star-multi}).
  Therefore, it holds that for $\L_{\text{ITD}}(\btheta_{\Psi^{k}}^{*}, \btheta_{\Phi}^{*}) = 0$
  \begin{align}
    \Phi(\bs, \ba; \btheta_{\Phi}^{*})
      &= \Psi(\bs, \ba; \btheta_{\Psi^{k}}^{*}) - \gamma \Psi(\bs', \ba'; \btheta_{\Psi^{k}}^{*}) \\
    \Phi(\bs, \ba; \btheta_{\Phi}^{*})^{\top} \bw^{k *}
      &= \Psi(\bs, \ba; \btheta_{\Psi^{k}}^{*})^{\top} \bw^{k *} - \gamma \Psi(\bs', \ba'; \btheta_{\Psi^{k}}^{*})^{\top} \bw^{k *} \\
      &\!\!\overset{(\ref{eq:btheta-q-star-multi})}{=} \frac{1}{\nu} Q^{\pi_{k\text{-expert}}}(\bs, \ba) + F^{k}(\bs) - \gamma \frac{1}{\nu} Q^{\pi_{k\text{-expert}}}(\bs', \ba') - \gamma F^{k}(\bs') \\
      &= \frac{1}{\nu} \left[\underdescribe{Q^{\pi_{k\text{-expert}}}(\bs, \ba) - \gamma Q^{\pi_{k\text{-expert}}}(\bs', \ba')}{r^{k\text{-expert}}(\bs, \ba)}\right] + F^{k}(\bs) - \gamma F^{k}(\bs') \\
      &= \frac{1}{\nu} r^{k\text{-expert}}(\bs, \ba) + \underdescribe{F^{k}(\bs) - \gamma F^{k}(\bs')}{\substack{\text{potential-based reward} \\ \text{shaping function}}} \,,
      \label{eq:potential-shaping-multi}
  \end{align}
  We have shown that the minimiser of $\L_{\text{BC-}Q}$ and $\L_{\text{ITD}}$ leads to agent-agnostic cumulants $\Phi(\bs, \ba; \btheta_{\Phi}^{*})$ and agent-specific preference vector $\bw^{k *}$, which when dot-producted, form a potential-based shaped and scaled reward function of the $k$-th expert reward function and hence the optimal policy for $\Phi(\bs, \ba; \btheta_{\Phi}^{*})^{\top} \bw^{k *}$ is also optimal for $r^{k\text{-expert}}(\bs, \ba)$ for all $\bs, \ba$~\citep{ng1999policy}.
  The result holds for all $k \in \{1 \ldots K\}$ since no assumptions were made for the proof about $k$.
\end{proof}

Next, we formalise the statement of Theorem 2.
When not specified the norm $\| \cdot \|$ refers to the 2-norm.
Given a function $F:\mathcal{X} \rightarrow \mathbb{R}^d$ for some finite set $\mathcal{X}$, we will write $F(x)$ to denote the value of the function on input $x$ and $F$ to denote the matrix representation of this function in $\mathbb{R}^{|\mathcal{X}| \times d}$.

\begin{theorem}[Generalisation Bound of $\Psi \Phi$-Learning]
Let $\mathcal{C} = (\mathcal{S}, \mathcal{A}, P, \gamma)$ be a CMP with a finite state space. Let $\phi: \mathcal{S} \rightarrow \mathbb{R}^d$, and let $\Phi = \phi(\mathcal{S}) \in \mathbb{R}^{|\mathcal{S}| \times d}$. Let $(r_i)_{i=1}^k$ denote a set of reward functions on $\mathcal{C}$, $\tilde{\Psi}^i$ be a collection of successor features approximations for policies $(\pi^i)_{i=1}^k$ ($\pi_i$ optimal for $r_i$) with true successor feature values $\Psi^i$, and $w_i$ the best least-squares linear approximator of $r_i$ given $\Phi$, with errors
\begin{equation*}
     \|\Phi w_i - r_i\|_\infty < \delta_r \text{ and } \| \tilde{\Psi}^i - \Psi^i \| < \delta_\Psi \quad \forall i.
\end{equation*}
Let $w'$ be a new preference vector for a reward function $r'$, with maximal error $\delta_r$ as well. Let $\tilde{Q}^i = \tilde{\Psi}^i w'$.  
Let $\pi^*$ be the optimal policy for the ego task $w'$ and let $\pi$ be the GPI policy obtained from $\{\tilde{Q}^{\pi_i}\}$, with $\delta_r, \delta_\Psi$ the reward and successor feature approximation errors. Then for all $s,a$
\begin{equation}
   Q^*(s,a) - Q^{\pi}(s,a)\leq  \frac{2}{1-\gamma} \bigg [(\phi_{\max}\|w_j - w'\|  + 2 \delta_r ) +  \|w'\|\delta_{\Psi} + \frac{1}{(1 - \gamma)} \delta_{r} \bigg ]
\end{equation}
\end{theorem}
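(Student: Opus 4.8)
The plan is to derive the bound as a careful refinement of the generalised-policy-improvement transfer bound of \citet{barreto2017successor}, isolating the two error sources. Throughout, write $\hat r_i = \Phi w_i$ and $\hat r' = \Phi w'$ for the linear surrogates of the true rewards, and, for a policy $\rho$ and reward $g$, let $Q^{\rho}_{g}$ denote the value of $\rho$ under $g$. Two facts are used repeatedly: by Equation~\eqref{eq:Q-Psi-times-w} the true successor features satisfy $Q^{\pi_i}_{\hat r'}(s,a) = \Psi^i(s,a)^{\top} w'$ \emph{exactly} (here I treat $\phi,\Phi$ as functions of $(s,a)$, as elsewhere in the paper; the restriction to states in the statement changes nothing); and for a fixed policy $\rho$ one has the reward-perturbation inequality $\|Q^{\rho}_{g_1} - Q^{\rho}_{g_2}\|_\infty \le \frac{\|g_1-g_2\|_\infty}{1-\gamma}$, with the analogous inequality for optimal value functions.

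First I would assemble the GPI skeleton. Since $\pi$ is greedy with respect to $\max_i \tilde Q^i$ and each $\tilde Q^i$ lies within $\epsilon \coloneqq \max_i \|\tilde Q^i - Q^{\pi_i}_{r'}\|_\infty$ of the true value $Q^{\pi_i}_{r'}$, the approximate GPI improvement lemma of \citet{barreto2017successor} gives $Q^{\pi}_{r'}(s,a) \ge \max_i Q^{\pi_i}_{r'}(s,a) - \frac{2\epsilon}{1-\gamma}$ for all $s,a$. Because each $\pi_i$ is optimal for $r_i$, chaining the reward-perturbation inequality with $Q^{\pi_i}_{r_i} = Q^{*}_{r_i}$ yields $Q^{*}_{r'}(s,a) - Q^{\pi_i}_{r'}(s,a) \le \frac{2}{1-\gamma}\|r' - r_i\|_\infty$ for every $i$. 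Combining these two facts and minimising over $i$ gives
\[
  Q^{*}(s,a) - Q^{\pi}(s,a) \;\le\; \frac{2}{1-\gamma}\Big(\min_i \|r' - r_i\|_\infty + \epsilon\Big),
\]
so it remains only to bound the two terms in the parenthesis. For the reward-mismatch term, the triangle inequality gives $\|r' - r_i\|_\infty \le \|r'-\hat r'\|_\infty + \|\hat r'-\hat r_i\|_\infty + \|\hat r_i-r_i\|_\infty \le \delta_r + \phi_{\max}\|w'-w_i\| + \delta_r$, where the middle summand is at most $\max_{s,a}|\phi(s,a)^{\top}(w'-w_i)| \le \phi_{\max}\|w'-w_i\|$ by Cauchy--Schwarz; minimising over $i$ and letting $w_j$ be the closest demonstration preference, $\min_i\|r'-r_i\|_\infty \le \phi_{\max}\|w'-w_j\| + 2\delta_r$. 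For the value-approximation term, using $Q^{\pi_i}_{\hat r'} = \Psi^i w'$ I would split $\|\tilde Q^i - Q^{\pi_i}_{r'}\|_\infty \le \|(\tilde\Psi^i-\Psi^i)w'\|_\infty + \|Q^{\pi_i}_{\hat r'} - Q^{\pi_i}_{r'}\|_\infty$; the first term is $\le \|(\tilde\Psi^i-\Psi^i)w'\| \le \|\tilde\Psi^i-\Psi^i\|\,\|w'\| \le \delta_\Psi\|w'\|$ (bounding the $\infty$-norm by the $2$-norm, then submultiplicativity), and the second is the reward-perturbation bound $\frac{\|\hat r'-r'\|_\infty}{1-\gamma} \le \frac{\delta_r}{1-\gamma}$, so $\epsilon \le \|w'\|\delta_\Psi + \frac{\delta_r}{1-\gamma}$. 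Substituting the two bounds into the display gives exactly $\frac{2}{1-\gamma}\big[\phi_{\max}\|w'-w_j\| + 2\delta_r + \|w'\|\delta_\Psi + \frac{\delta_r}{1-\gamma}\big]$.

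The main obstacle — and the reason this is not merely a restatement of \citet{barreto2017successor} — is the bookkeeping forced by the fact that the learned cumulants do not represent the rewards exactly. Because $r' \neq \Phi w'$ one cannot identify $Q^{\pi_i}_{r'}$ with $\Psi^i w'$, and the extra $\frac{\delta_r}{1-\gamma}$ inside $\epsilon$ is precisely the price of passing between the true task and its linear surrogate; the two copies of $\delta_r$ in the mismatch term arise from a logically separate use of the approximation hypothesis (controlling how far each $\pi_i$'s optimality transfers to $r'$), so the errors genuinely add rather than collapse. A smaller but easy-to-botch point is the norm bookkeeping: $\phi_{\max}$ and $\|w'\|$ are $2$-norm quantities whereas all value errors live in $\|\cdot\|_\infty$, so one must keep track of the bridges $\|x\|_\infty\le\|x\|_2$ and Cauchy--Schwarz, and the matrix norm in the hypothesis on $\delta_\Psi$ must be one satisfying $\|Aw'\|_2\le\|A\|\,\|w'\|_2$ (spectral or Frobenius both work).
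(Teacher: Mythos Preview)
Your proof is correct and follows essentially the same route as the paper's: both invoke the approximate GPI bound of \citet{barreto2017successor} to obtain the $\frac{2}{1-\gamma}(\min_i\|r'-r_i\|_\infty+\epsilon)$ skeleton, bound the reward-mismatch term by the triangle inequality $\|r'-r_i\|_\infty\le\phi_{\max}\|w'-w_i\|+2\delta_r$, and split the value-approximation error $\epsilon$ into the successor-feature error $\|w'\|\delta_\Psi$ plus the reward-surrogate error $\frac{\delta_r}{1-\gamma}$ (the paper isolates this last step as a separate lemma, which is exactly your reward-perturbation inequality). Your norm bookkeeping is in fact more explicit than the paper's.
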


\citet{barreto2017successor} construct their bound on the sub-optimality of the GPI policy as a function of the error of the value approximations $\widetilde{Q}^i$. Because we bound the reward approximation error, rather than the value approximation error, we require an additional step to obtain a bound on the errors of the value funciton approximations. To prove Theorem 1, we must therefore first use the following lemma to bound the effect of the \textit{reward approximation error} on the value approximation error. While this result is straightforward, we include a short proof for completeness.  
\begin{lemma}
Fix some policy $\pi$. Let $r$ be reward vector and let $w$ be the least-squares solution to $\min \| \Phi w - r\|$. Let $\Psi^\pi$ be the true successor features for $\Phi$ under policy $\pi$, and let $Q^\pi$ be the value. Let $\delta_r = R(\mathcal{S}) - \Phi w$, $\delta_{max} = \| \delta_r \|_\infty$. Then letting $\tilde{Q} = \Psi w$, we have
\begin{equation}
\| Q^\pi - \tilde{Q} \|_\infty \leq \frac{1}{1-\gamma} \delta_r
\end{equation}
\end{lemma}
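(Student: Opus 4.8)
The plan is to recognise $\tilde{Q} = \Psi^\pi w$ as itself an ordinary value function — the value of $\pi$ under the \emph{surrogate} reward $\hat{r} \define \Phi w$ — so that the gap $Q^\pi - \tilde{Q}$ becomes the value, under $\pi$, of the residual reward $\delta_r = r - \hat{r}$, to which the standard geometric bound on discounted returns of a bounded reward applies directly.

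First I would unfold the definition of successor features, Eqn.~\eqref{eq:sf-definition}: since $\Psi^\pi(s,a) = \E^{\C,\pi}\big[\sum_{t \ge 0}\gamma^t \Phi(s_t,a_t) \mid s_0 = s, a_0 = a\big]$, linearity of expectation and of the inner product give
\begin{equation*}
\tilde{Q}(s,a) \;=\; \Psi^\pi(s,a)^\top w \;=\; \E^{\C,\pi}\Big[\textstyle\sum_{t \ge 0} \gamma^t\, \Phi(s_t,a_t)^\top w \,\Big|\, s_0 = s, a_0 = a\Big] \;=\; Q^{\pi, \hat{r}}(s,a),
\end{equation*}
i.e.\ $\tilde{Q}$ is the action-value of $\pi$ for the reward $\hat{r} = \Phi w$, while $Q^\pi = Q^{\pi,r}$ by definition.

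Next I would subtract the two expansions. Both expectations run over the \emph{same} trajectory distribution — that of $\pi$ in $\C$ started from $(s,a)$ — so the difference collapses to a single discounted sum of the per-step reward residual:
\begin{equation*}
Q^\pi(s,a) - \tilde{Q}(s,a) \;=\; \E^{\C,\pi}\Big[\textstyle\sum_{t \ge 0} \gamma^t \big(r(s_t,a_t) - \hat{r}(s_t,a_t)\big) \,\Big|\, s_0=s, a_0=a\Big] \;=\; \E^{\C,\pi}\Big[\textstyle\sum_{t \ge 0} \gamma^t\, \delta_r(s_t,a_t) \,\Big|\, \cdot \Big].
\end{equation*}
Bounding termwise by $|\delta_r(s_t,a_t)| \le \|\delta_r\|_\infty = \delta_{max}$ and summing the geometric series yields $|Q^\pi(s,a) - \tilde{Q}(s,a)| \le \delta_{max}/(1-\gamma)$; taking the supremum over $(s,a)$ gives the claim. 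Equivalently, one could observe that $Q^\pi$ and $\tilde{Q}$ are the fixed points of the Bellman evaluation operators $T^\pi_r$ and $T^\pi_{\hat r}$, which differ only in their additive reward term, and invoke the $\gamma$-contraction of $T^\pi$ in $\|\cdot\|_\infty$; this produces the same constant.

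There is no genuine obstacle here: the lemma is essentially the one-line fact that a policy's discounted return under a reward of magnitude at most $\delta_{max}$ is at most $\delta_{max}/(1-\gamma)$. The only care needed is bookkeeping — keeping the two expectations over the identical trajectory measure so the residual can be pulled out as a single sum, and reading the right-hand side of the stated inequality as the scalar $\delta_{max}/(1-\gamma)$ rather than the vector $\delta_r$ itself (a minor notational slip in the statement).
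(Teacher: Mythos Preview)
Your proof is correct and follows essentially the same approach as the paper: both recognise that $Q^\pi$ and $\tilde{Q}$ are discounted sums (under the same trajectory distribution induced by $\pi$) of $r$ and $\Phi w$ respectively, subtract to obtain a discounted sum of the residual $\delta_r$, bound each term by $\delta_{\max}$, and sum the geometric series. The only cosmetic difference is that the paper writes the same computation in matrix form using the stochastic matrix $P^\pi$ (and the fact that its powers have rows summing to one) where you use the trajectory-expectation form; you also correctly flag the $\delta_r$ vs.\ $\delta_{\max}$ notational slip in the stated bound.
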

\begin{proof}
\begin{align}
   \| Q^\pi - \tilde{Q} \|_\infty &\leq \sum \gamma^t \| P^{\pi t} (\Phi w - r) \|_\infty \\
 &\leq  \sum \gamma^t \|P^{\pi t} \delta_r \|_\infty = \sum_{t} \gamma^t \max_{s'}| \sum_{s \in \mathcal{S}} (P^{\pi})^t(s', s) \delta_r(s)| \\
    \intertext{Since $P^\pi$ is a stochastic matrix, so are all of its powers, and so the rows of $(P^\pi)^t$ sum to 1. }
 &\leq \sum_{t} \gamma^t \max_{s'} |\sum {P^{\pi}}^t(s, s')\delta_{\max}| = \sum \gamma^t \delta_{\max}\\ 
 &= \frac{1}{1-\gamma} \delta_{\max}
\end{align}
\vspace{-1em}
\end{proof}
We now prove the main result.

\begin{proof}
We follow the proof of~\citet[Theorem 2]{barreto2017successor}, with additional error terms to account for the reward and successor feature approximation errors.
\begin{align*}
    Q^*(s,a) - Q^{\pi}(s,a) &\leq Q^*(s,a) - Q^{\pi_j}(s,a) + \frac{2}{1-\gamma} \epsilon &\text{\citep[Theorem 1]{barreto2017successor}}\\ 
    &\leq \frac{2}{1-\gamma} \| r_j - r'\|_\infty +  \frac{2}{1 - \gamma} \epsilon  &\text{\citep[Lemma 1]{barreto2017successor}}\\
    &\leq \frac{2}{1-\gamma} \| \phi w_j + \delta_j - \phi w' - \delta'  \|_\infty +  \frac{2}{1 - \gamma} \epsilon \\
    &\leq \frac{2}{1-\gamma} (\phi_{\max}\|w_j - w'\|  + \delta_r + \delta_r ) + \frac{2}{1 - \gamma} \epsilon \\
    &\leq \frac{2}{1-\gamma} (\phi_{\max}\|w_j - w'\|  + 2 \delta_r ) + \frac{2}{1 - \gamma} \| \tilde{\Psi}^j w' - \Psi_j w' + \Psi_j w' - Q_j\| \\
    &\leq\frac{2}{1-\gamma}  (\phi_{\max}\|w_j - w'\|  + 2 \delta_r )  + \frac{2}{1 - \gamma} \| \tilde{\Psi}^j w' - \Psi_j w' \| + \|\Psi_j w' - Q_j\| \\
    &\leq \frac{2}{1-\gamma} (\phi_{\max}\|w_j - w'\|  + 2 \delta_r )  + \frac{2}{1-\gamma} \|w'\|\delta_{\Psi} + \frac{2}{1-\gamma} \| \Psi_j w' - Q_j\|  \\
    &\leq \frac{2}{1-\gamma}  (\phi_{\max}\|w_j - w'\|  + 2 \delta_r ) + \frac{2}{1-\gamma} \|w'\|\delta_{\Psi} + \frac{2}{1-\gamma} (\frac{1}{1-\gamma} \delta_{r} ) &\text{(Lemma 1)} \\
    &= \frac{2}{1-\gamma} \bigg [(\phi_{\max}\|w_j - w'\|  + 2 \delta_r ) +  \|w'\|\delta_{\Psi} + \frac{1}{(1 - \gamma)} \delta_{r} \bigg ]
\end{align*}
\end{proof}

\clearpage

\section{Algorithms}
\label{app:algorithms}

\begin{algorithm}
      \SetEndCharOfAlgoLine{}
      \SetKwComment{Comment}{$\triangleright$ }{}
      \SetKwInOut{Input}{Input}
      \SetKwInOut{Output}{Output}
      \Input{\\\hspace{-3.6em}
        \begin{tabular}[t]{l @{\hspace{1em}} l}
          $\D = \{(\bs_{1}, \ba_{1}, \ldots, \ba_{T}; k)_{k=1}^{K}\}$ & No-reward demonstrations \\
          $\lambda_{\bw}$ & $\L_{1}$ loss coefficient
        \end{tabular}
      }
      \Output{\\\hspace{-3.6em}
        \begin{tabular}[t]{l @{\hspace{1em}} l}
          $\btheta_{\Phi}$ & Parameters of cumulants network \\
          $\{\btheta_{\Psi^{k}}\}_{k=1}^{K}$ & Parameters of successor features approximators \\
          $\{\bw^{k}\}_{k=1}^{K}$ & Preferences vectors for the $K$ agents
        \end{tabular}
      }
      \BlankLine
      \tcp*[l]{initialisations}
      Initialise parameters $\btheta_{\Phi}, \{\btheta_{\Psi^{k}}, \bw^{k}\}_{k=1}^{K}$ \;
      \BlankLine
      \While{\texttt{budget}}{
        \BlankLine
        Sample trajectories $\{\tau_i = (s_{1}^{(i)}, a_{1}^{(i)}, \ldots, s_{T}^{(i)}, a_{T}^{(i)}; k^{(i)})\}_{i=1}^{N} \sim \D$ \;
        \BlankLine
        Calculate behavioural cloning loss $\L_{\text{BC-}Q}(\btheta_{\Psi^{k}}, \bw^{k})$ on samples $\{\tau_{i}\}_{i=1}^{N}$ \Comment*[r]{see Eqn.~(\ref{eq:bcq-loss})}
        $\btheta_{\Psi^{k}} \overset{\alpha}{\leftarrow} \nabla_{\btheta_{\Psi^{k}}} \L_{\text{BC-}Q}(\btheta_{\Psi^{k}}, \bw^{k})$ \Comment*[r]{update $\Psi$s}
        $\bw^{k} \overset{\alpha}{\leftarrow} \nabla_{\bw^{k}} \left(\L_{\text{BC-}Q}(\btheta_{\Psi^{k}}, \bw^{k}) + \lambda_{\bw} \| \bw^{k} \|_{1} \right)$ \Comment*[r]{update $\bw$s}
        \BlankLine
        Calculate inverse temporal difference loss $\L_{\text{ITD}}(\btheta_{\Phi}, \btheta_{\Psi^{k}})$ on samples $\{\tau_{i}\}_{i=1}^{N}$ \Comment*[r]{see Eqn.~(\ref{eq:itd-loss})}
        $\btheta_{\Phi} \overset{\alpha}{\leftarrow} \nabla_{\btheta_{\Phi}} \mathcal{L}_{\text{ITD}}(\btheta_{\Psi_{k}}, \btheta_{\Phi})$ \Comment*[r]{update $\Phi$}
        $\btheta_{\Psi_{k}} \overset{\alpha}{\leftarrow} \nabla_{\btheta_{\Psi_{k}}} \mathcal{L}_{\text{ITD}}(\btheta_{\Psi_{k}}, \btheta_{\Phi})$ \Comment*[r]{update $\Psi$s}
      }
      \caption{
        Inverse Temporal Difference Learning
      }
      \label{algo:itd}
\end{algorithm}

 \begin{algorithm}
      \SetEndCharOfAlgoLine{}
      \SetKwComment{Comment}{$\triangleright$ }{}
      \SetKwInOut{Input}{Input}
      \SetKwInOut{Output}{Output}
      \Input{\\\hspace{-3.6em}
        \begin{tabular}[t]{l @{\hspace{1em}} l}
          $\D = \{(\bs_{1}, \ba_{1}, \ldots, \ba_{T}; k)_{k=1}^{K}\}$ & No-reward demonstrations \\
          $\lambda_{\bw}$ & $\L_{1}$ loss coefficient
        \end{tabular}
      }
      \Output{\\\hspace{-3.6em}
        \begin{tabular}[t]{l @{\hspace{1em}} l}
          $\btheta_{\Psi^{\text{ego}}}$ & Ego successor features approximator \\
          $\btheta_{\Phi}$ & Parameters of cumulants network \\
          $\{\btheta_{\Psi^{k}}\}_{k=1}^{K}$ & Parameters of successor features approximators \\
          $\{\bw^{k}\}_{k=1}^{K}$ & Preferences vectors for the $K$ agents
        \end{tabular}
      }
      \BlankLine
      \tcp*[l]{initialisations}
      Empty replay buffer for ego-experience $\mathcal{B} = \{\}$ \;
      Initialise parameters $\btheta_{\Psi^{\text{ego}}}, \bw^{\text{ego}}, \btheta_{\Phi}, \{\btheta_{\Psi^{k}}, \bw^{k}\}_{k=1}^{K}$ \;
      \BlankLine
      \While{\texttt{budget}}{
        \BlankLine
        \tcp*[l]{agent-environment interaction}
        Reset episode, $\bs \leftarrow \texttt{env.reset()}$, $t \leftarrow 0$ \;
        \While{not \texttt{done}}{
          $\bw^{\text{ego}} \leftarrow \argmin_{w} \L_{\text{R}}(\btheta_{\Phi}, w; \B)$ \Comment*[r]{ego-task inference, see Eqn.~(\ref{eq:r-loss})}
          $\ba \leftarrow \pi^{\text{ego}}_{\text{GPI}} \left( \bs; \Psi^{\text{ego}}, \bw^{\text{ego}}, \{\btheta_{\Psi^{k}}\}_{k=1}^{K} \right)$ \Comment*[r]{GPI, see Eqn.~(\ref{eq:pi-ego})}
          Step in the environment, $\bs', r^{\text{ego}}, \texttt{done} \leftarrow \texttt{env.step(}\ba\texttt{)}$ \;
          Append transition in the replay buffer, $\mathcal{B} \leftarrow \mathcal{B} \cup \left( \bs, \ba, r^{\text{ego}}, \bs' \right)$ \;
          $\bs' \leftarrow \bs$, $t \leftarrow t + 1$ \;
          {\color{matlab-light-blue}(online demonstrations) Append demonstrations in $\D$ \Comment*[r]{optional}}
        }
        \BlankLine
        \tcp*[l]{parameter updates/learning}
        $\btheta_{\Phi}, \{\btheta_{\Psi^{k}}, \bw^{k}\}_{k=1}^{K} \leftarrow \texttt{ITD}\left( \D, \lambda_{\bw}, \btheta_{\Phi}, \{\btheta_{\Psi^{k}}, \bw^{k}\}_{k=1}^{K} \right)$ \Comment*[r]{see Algorithm.~(\ref{algo:itd})}
        \BlankLine
        Sample transitions $\{(\bs^{(i)}, \ba^{(i)}, r^{\text{ego}, (i)}, \bs'^{(i)})\} \sim \B$ \;
        \BlankLine
        Calculate the reward loss $\L_{\text{R}}(\btheta_{\Phi}, \bw^{\text{ego}})$ \Comment*[r]{see Eqn.~(\ref{eq:r-loss})}
        $\btheta_{\Phi} \overset{\alpha}{\leftarrow} \nabla_{\btheta_{\Phi}} \L_{\text{R}}(\btheta_{\Phi}, \bw^{\text{ego}})$ \Comment*[r]{update $\Phi$}
        \BlankLine
        Calculate TD losses $\mathcal{L}_{Q}(\btheta_{\Psi^{\text{ego}}})$ and $\mathcal{L}_{\text{TD-}\Psi}(\btheta_{\Psi^{\text{ego}}})$ \Comment*[r]{see Eqn.~(\ref{eq:q-loss},\ref{eq:psi-loss})}
        $\btheta_{\Psi^{\text{ego}}} \overset{\alpha}{\leftarrow} \nabla_{\btheta_{\Psi^{\text{ego}}}} \left( \mathcal{L}_{Q}(\btheta_{\Psi^{\text{ego}}}) + \frac{1}{| \Psi |}  \mathcal{L}_{\text{TD-}\Psi}(\btheta_{\Psi^{\text{ego}}}) \right)$ \Comment*[r]{update $\Psi^{\text{ego}}$}
      }
      \caption{
        $\Psi \Phi$-Learning
      }
      \label{algo:psiphi-learning}
\end{algorithm}

\clearpage

\section{Visualisations}
\label{app:visualisations}

\begin{figure}[h]
\vspace{-0.3cm}
  \centering
  \begin{subfigure}[b]{0.19\linewidth}
    \centering
    \includegraphics[width=\textwidth]{assets/misc/coingrid/board.pdf}
    \vspace*{-8mm}
    \caption{CoinGrid}
  \end{subfigure}%
  \begin{subfigure}[b]{0.19\linewidth}
    \centering
    \includegraphics[width=\textwidth]{assets/misc/coingrid/phi_1.pdf}
    \vspace*{-8mm}
    \caption{$\phi_{1}$}
  \end{subfigure}
  \begin{subfigure}[b]{0.19\linewidth} 
    \centering
    \includegraphics[width=\textwidth]{assets/misc/coingrid/phi_2.pdf}
    \vspace*{-8mm}
    \caption{$\phi_{2}$}
  \end{subfigure}
  \begin{subfigure}[b]{0.19\linewidth}
    \centering
    \includegraphics[width=\textwidth]{assets/misc/coingrid/phi_3.pdf}
    \vspace*{-8mm}
    \caption{$\phi_{3}$}
  \end{subfigure}
  \begin{subfigure}[b]{0.19\linewidth}
    \centering
    \includegraphics[width=\textwidth]{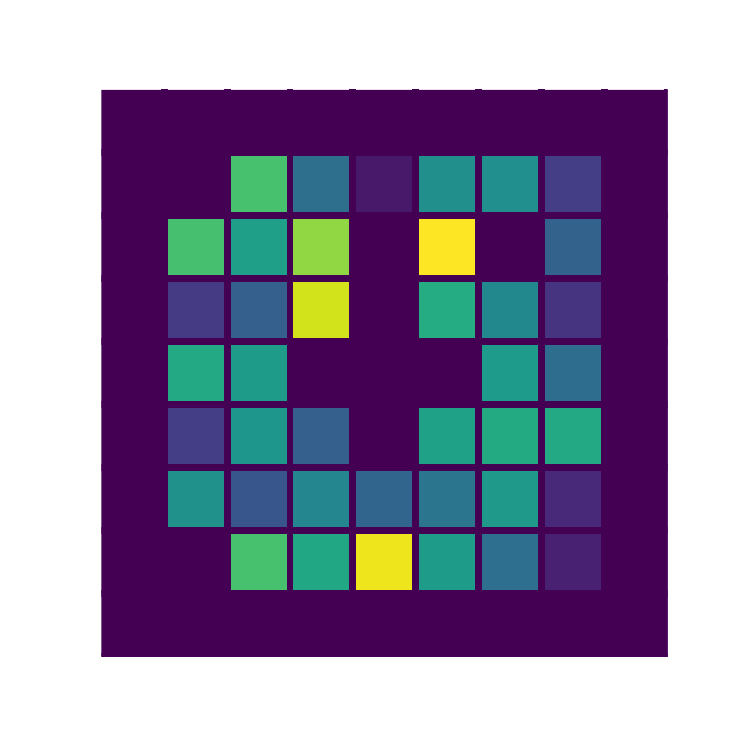}
    \vspace*{-8mm}
    \caption{$\phi_{4}$}
  \end{subfigure}
  \caption{
    Qualitative evaluation of the learned cumulants in the CoinGrid task.
    Cumulants $\phi_{1}$, $\phi_{2}$, and $\phi_{3}$ seem to capture the red, green, and yellow blocks, respectively.
    The yellow blocks are captured by both  and $\phi_{4}$.
    Therefore, linear combinations of the learned cumulants can represent arbitrary rewards in the environment, which involve stepping on the coloured blocks.
  }
  \label{fig:cumulants-complete}
\end{figure}

\begin{figure}[h]
\vspace{-0.3cm}
  \centering
  \begin{subfigure}[b]{0.3\linewidth}
    \centering
    \includegraphics[width=\textwidth]{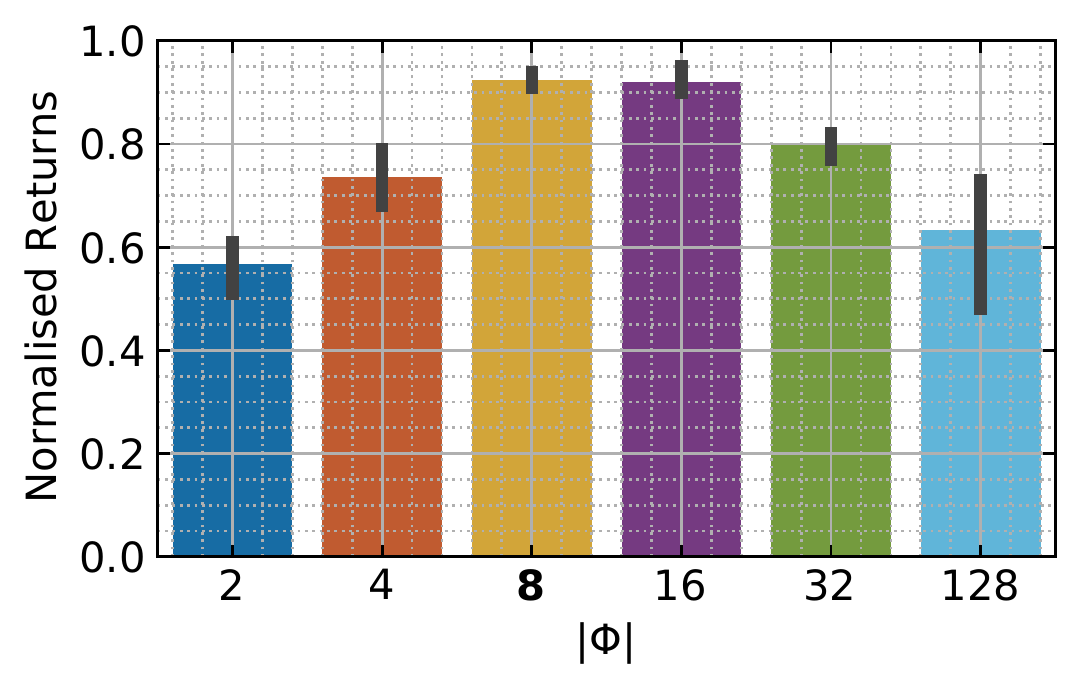}
    \caption{ITD for Roundabout}
  \end{subfigure}%
  \begin{subfigure}[b]{0.3\linewidth}
    \centering
    \includegraphics[width=\textwidth]{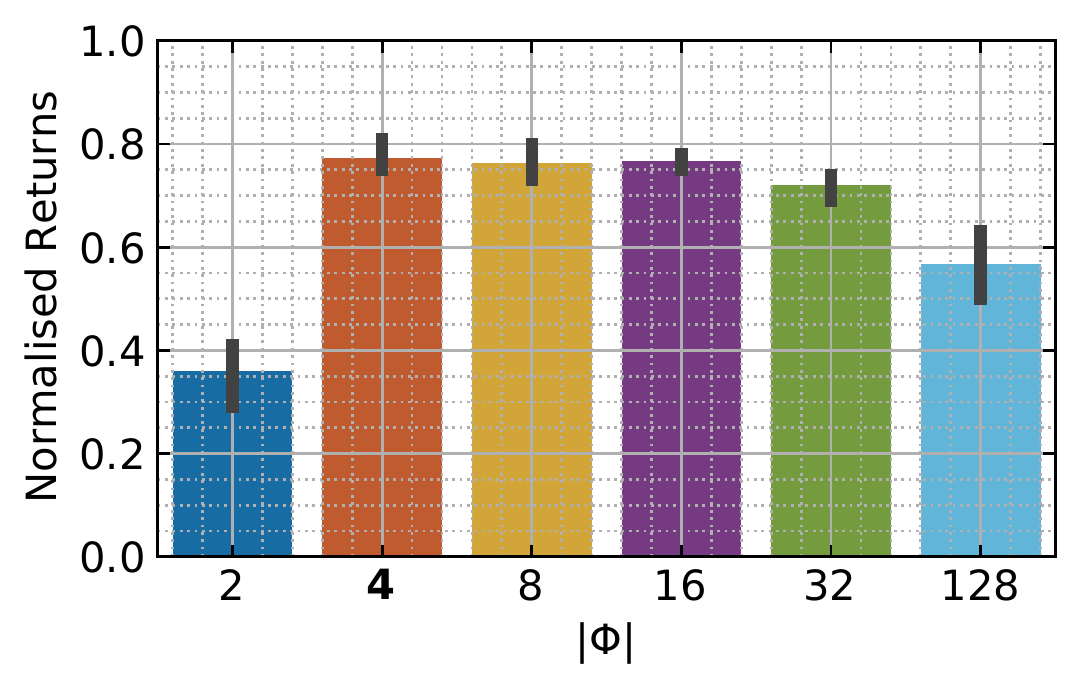}
    \caption{ITD for CoinGrid}
  \end{subfigure}
  \begin{subfigure}[b]{0.3\linewidth} 
    \centering
    \includegraphics[width=\textwidth]{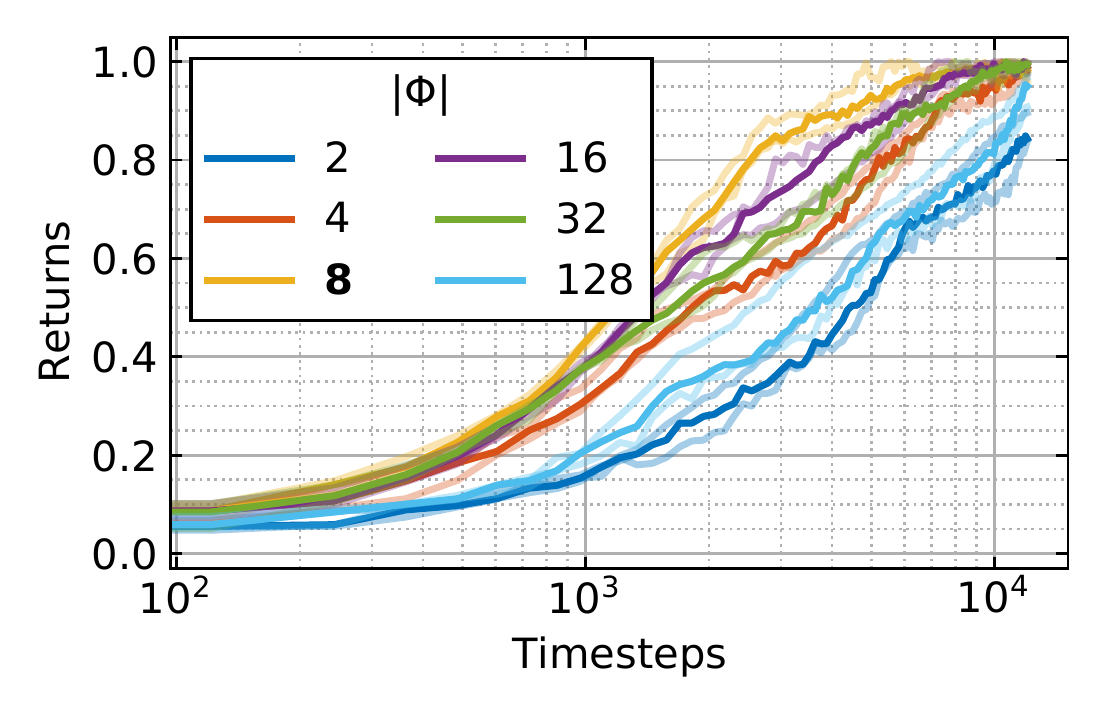}
    \caption{$\Psi \Phi$-learning for Highway Multi-Task} 
  \end{subfigure}
  \caption{
    Sensitivity of our ITD (see Section~\ref{subsec:inverse-temporal-difference-learning}) and $\Psi \Phi$-learning (see Section~\ref{subsec:psi-phi-learning-with-no-reward-demonstrations}) algorithms to the dimensionality of the learned cumulants.
    We consistently observe across all three experiments (a)-(c) that for a small number of $\Phi$ dimensions the cumulants are not expressive enough to capture the axis of variation of the different agents' reward functions (including the ego-agent in (c)).
    We also note that the performance of both ITD and $\Psi \Phi$-learning is relative robust for a medium and large number of $\Phi$ dimensions.
    We attribute this to the used sparsity prior, i.e., $\L_{1}$ loss, to the preferences $\bw$.
    In our experiments we selected the smallest number of $\Phi$ dimensions that demonstrated good performance to keep the number of model parameters as small as possible (in bold in the figures and reported in Table~\ref{tab:hyperparameters}).
  }
  \label{fig:sensitivity-to-size-of-cumulants}
\end{figure}

\end{document}